\documentclass{article}

\usepackage{microtype}
\usepackage{graphicx}
\usepackage{subcaption}
\usepackage{booktabs} 

\usepackage{hyperref}

\usepackage[preprint]{icml2026}

\usepackage{amsmath}
\usepackage{amssymb}
\usepackage{mathtools}
\usepackage{amsthm}

\usepackage[capitalize,noabbrev]{cleveref}

\theoremstyle{plain}
\newtheorem{theorem}{Theorem}[section]

\theoremstyle{definition}

\theoremstyle{remark}

\usepackage[textsize=tiny]{todonotes}

\usepackage{xcolor} 
\usepackage{blindtext} 
\usepackage{lipsum} 
\usepackage{amssymb}
\usepackage{amsthm} 
\usepackage{enumitem} 
\usepackage{algorithm}
\usepackage{algorithmic}
\usepackage{amsmath,amssymb}

\theoremstyle{plain}

\icmltitlerunning{AGP-MARL}

\begin{document}

\twocolumn[
  \icmltitle{Action-Graph Policies: Learning Action Co-dependencies in \\Multi-Agent Reinforcement Learning}



  \icmlsetsymbol{equal}{*} 

  \begin{icmlauthorlist}
    \icmlauthor{Nikunj Gupta}{usc}
    \icmlauthor{James Zachary Hare}{arl}
    \icmlauthor{Jesse Milzman}{arl}
    \icmlauthor{Rajgopal Kannan}{aro}
    \icmlauthor{Viktor Prasanna}{usc}
  \end{icmlauthorlist}

  \icmlaffiliation{usc}{University of Southern California, Los Angeles, CA, USA}
  \icmlaffiliation{arl}{DEVCOM Army Research Laboratory, Adelphi, MD, USA}
  \icmlaffiliation{aro}{DEVCOM ARL Army Research Office, Los Angeles, CA, USA}

  \icmlcorrespondingauthor{Nikunj Gupta}{nikunj@usc.edu}




  \icmlkeywords{Machine Learning, ICML}

  \vskip 0.3in
]



\printAffiliationsAndNotice{}  

\begin{abstract}
\label{sec:abstract} 

Coordinating actions is the most fundamental form of cooperation in multi-agent reinforcement learning (MARL). Successful decentralized decision-making often depends not only on good individual actions, but on selecting compatible actions across agents to synchronize behavior, avoid conflicts, and satisfy global constraints. In this paper, we propose Action Graph Policies (AGP), that model dependencies among agents' available action choices. It constructs, what we call, \textit{coordination contexts}, that enable agents to condition their decisions on global action dependencies. Theoretically, we show that AGPs induce a strictly more expressive joint policy compared to fully independent policies and can realize coordinated joint actions that are provably more optimal than greedy execution even from centralized value-decomposition methods. Empirically, we show that AGP achieves 80-95\% success on canonical coordination tasks with partial observability and anti-coordination penalties, where other MARL methods reach only 10-25\%. We further demonstrate that AGP consistently outperforms these baselines in diverse multi-agent environments.

\end{abstract}

\section{Introduction} 
\label{sec:Introduction} 
Multi-agent systems face coordination challenges across diverse domains: swarms of autonomous drones must coordinate surveillance patterns without collisions \cite{marek2025collision,alqudsi2025uav,tahir2019swarms}, warehouse robots must allocate tasks to avoid congestion \cite{zhen2025optimizing,keith2024review}, and distributed sensor networks must jointly select which nodes transmit to conserve energy \cite{li2022applications}. In each case, success requires agents to satisfy global constraints, for example, exactly $K$ agents must be active, no two robots may select the same resource, or total transmissions must remain below a threshold, while making decentralized decisions based only on local observations. These coordination problems share a common structure: optimal joint behavior depends not merely on each agent selecting a good individual action, but on the compatibility of actions across the team. When exactly three vehicles must enter an intersection simultaneously for smooth traffic flow, or when a formation requires precisely two agents to occupy point positions, independent decision-making systematically fails. Yet in decentralized multi-agent reinforcement learning (MARL), the dominant learning paradigms continue to assume complete agent independence even during centralized training \cite{zhang2021multi,oroojlooy2023review}. 
This imposes a policy-side restriction, fundamentally limiting which coordinated joint actions can be represented or enforced.  




In many coordination-critical applications, it is both reasonable and desirable to minimally relax strict independence at execution time, allowing limited, structured coordination. Examples include shared situational context in robotics teams \cite{wang2022distributed,siddiqua2024information,senaratne2025framework}, lightweight coordination primitives in networked control systems \cite{liu2025survey,mason2024multi}, or pre-decision aggregation in multi-robot planning \cite{ebert2022distributed}. These settings motivate policy architectures that move beyond fully independent execution, without resorting to fully centralized or sequential control. 

In this paper, we propose \textbf{Action-Graph Policies (AGP)}, a policy architecture that enables richer coordination by explicitly modeling action-level relations among agents. Rather than treating agents as the fundamental units of coordination, AGP adopts an action-centric representation in which individual \((\text{agent}, \text{action})\) pairs are modeled as nodes in a learned graph. The key insight is that coordination constraints such as ``exactly $K$ agents act'' or ``no two agents select the same resource'' are naturally expressed as relationships between specific action choices across agents. AGP constructs embeddings for each such pair and applies graph attention over these action nodes to learn which combinations are compatible or conflicting. 
Each agent's policy then conditions on its local observation and a learned ``coordination context" which summarizes how its available actions participate in the global coordination structure. This design enables the policy to represent joint behaviors that are unreachable under independent factorization. 

This paper makes four primary contributions. 
\textbf{First}, we provide a formal analysis of the policy-side bottleneck, characterizing coordination problems where independent policies are provably suboptimal and showing that value decomposition cannot overcome this limitation. 
\textbf{Second}, we introduce Action-Graph Policies that model action-level dependencies to enable coordinated joint behavior with minimal relaxation of decentralized execution. 
\textbf{Third}, we present a theoretical characterization of the joint policy class induced by AGP, proving that it strictly generalizes independent decentralized policies and admits joint policies that are provably closer (in forward KL divergence) to optimal centralized policies than any independently factorized execution. 
\textbf{Finally}, we empirically validate AGP on coordination benchmarks: in canonical games with partial observability and anti-coordination penalties, AGP achieves $80$-$95\%$ success while other MARL methods achieve only $10$-$25\%$, and in diverse multi-agent domains, AGP exceeds strong MARL baselines.

\section{Preliminaries and Related Works} 
\label{sec:Preliminaries} 
\paragraph{Model.} 
We model a cooperative MARL problem as a decentralized partially observable Markov decision process 
(Dec-POMDP) \cite{oliehoek2016concise,bernstein2002complexity}, defined by the tuple
\begin{equation}
\mathcal{M} = \langle \mathcal{I}, \mathcal{S}, \{\mathcal{A}_i\}_{i \in \mathcal{I}}, 
\mathcal{T}, \mathcal{R}, \{\mathcal{O}_i\}_{i \in \mathcal{I}}, \Omega, \gamma \rangle.
\end{equation}
Here, $\mathcal{I}=\{1,\dots,N\}$ is the set of agents, $\mathcal{S}$ is 
the global state space, $\mathcal{A}_i$ is the discrete action space of 
agent $i$, $\mathcal{A}=\prod_{i \in \mathcal{I}} \mathcal{A}_i$ is the joint action space, 
$\mathcal{T}(s' \mid s, \mathbf{a})$ is the transition function, 
$\mathcal{R}(s,\mathbf{a})$ is the shared reward, $\mathcal{O}_i$ is the 
observation space of agent $i$, $\Omega(\mathbf{o} \mid s,\mathbf{a})$ 
is the observation function, and $\gamma \in (0,1)$ is the discount factor.
At each timestep $t$, the environment is in an unobserved state 
$s_t \in \mathcal{S}$. Each agent $i$ receives a local observation 
$o_{i,t} \in \mathcal{O}_i$, selects an action 
$a_{i,t} \in \mathcal{A}_i$, and we denote the joint observation and joint action as
$\mathbf{o}_t = (o_{1,t},\dots,o_{N,t})$ and 
$\mathbf{a}_t = (a_{1,t},\dots,a_{N,t})$. 
The joint action $\mathbf{a}_t$ produces a shared reward
$r_t=\mathcal{R}(s_t,\mathbf{a}_t)$ and a next state 
$s_{t+1} \sim \mathcal{T}(\cdot \mid s_t,\mathbf{a}_t)$. 
A stochastic joint policy $\pi$ is a conditional distribution
$
\pi(\mathbf{a} \mid \mathbf{o})
= \pi(a_1,\dots,a_N \mid o_1,\dots,o_N),
$
mapping joint observations $\mathbf{o}$ to joint actions $\mathbf{a}$.
The objective is to maximize the expected discounted return
$J(\pi) = \mathbb{E}_{\pi,\mathcal{T}}\Big[ \sum_{t=0}^{\infty} \gamma^t r_t \Big]$.
We denote the joint action-value function by $Q^\pi(s,\mathbf{a})$ and the
state-value function by $V^\pi(s)$.

\paragraph{Joint Policies and Independent-Policy Factorization.} 
A joint policy specifies a distribution over joint actions conditioned on 
joint observations,
\begin{equation}
\pi(\mathbf{a}\mid \mathbf{o}) \in \Delta(\mathcal{A}_1 \times \cdots \times \mathcal{A}_N),
\end{equation}
where $\Delta(\cdot)$ denotes the probability simplex over a finite set,
$\mathbf{o}$ and $\mathbf{a}$.
This joint policy fully characterizes the coordinated behavior of the multi-agent 
system. Directly representing or learning $\pi(\mathbf{a}\mid \mathbf{o})$ 
is generally intractable due to the exponential growth of the joint action 
space with the number of agents. For this reason, most decentralized MARL 
methods approximate the joint policy using a product of per-agent policies, $\pi_i(a_i \mid o_i)$, 
yielding the independent policy form
\begin{equation}
\label{eq:independent}
\pi(\mathbf{a}\mid \mathbf{o}) = \prod_{i=1}^N \pi_i(a_i \mid o_i).
\end{equation}
This factorization is referred to as the independent global optimum 
(IGO) assumption \cite{zhang2021fop}. We denote by $\Pi_{\mathrm{ind}}$ 
the class of all joint policies admitting this factorization. Importantly, 
$\Pi_{\mathrm{ind}}$ defines a strict subset of all possible joint 
distributions over actions.

\paragraph{Value Decomposition Methods.} 
To compensate for the loss of expressiveness induced by independent policies, 
many MARL methods adopt a centralized training, decentralized execution 
(CTDE) paradigm \cite{kraemer2016multi,amato2024introduction} and introduce 
coordination on the value side. In this framework, agents still execute 
independent policies of the form $\pi_i(a_i \mid o_i)$, but learning is 
guided by a centralized action-value function that evaluates joint actions. 
Value-decomposition methods \cite{sunehag2017value,rashid2020monotonic,rashid2020weighted,son2019qtran,wang2020qplex} instantiate 
this idea by learning per-agent utility functions $Q_i(o_i,a_i)$ together 
with a mixing function that combines them into a joint value,
\begin{equation}
Q_{\mathrm{tot}}(\mathbf{o},\mathbf{a})
= f\big(Q_1(o_1,a_1), \dots, Q_N(o_N,a_N)\big),
\end{equation}
where $f$ is designed to ensure that decentralized greedy action selection 
with respect to each $Q_i$ is consistent with maximizing $Q_{\mathrm{tot}}$. 
For instance, VDN \cite{sunehag2017value} uses a simple additive mixing, 
while QMIX \cite{rashid2020monotonic} and its variants learn a more flexible, 
monotonic mixing network. More expressive value-based methods introduce 
structured interactions between agents' actions through coordination graphs 
\cite{bohmer2020deep, li2020deepimplicit, wang2022contextaware, gupta2025deep, duan2024group, kang2022non, yang2022self, gupta2025tiger, liu2025survey}. 
In this setting, agents are represented as nodes in a graph, and edges encode
pairwise utility functions between agents. The joint value is decomposed as
\begin{equation}
Q_{\mathrm{tot}}(\mathbf{o},\mathbf{a})
= \tfrac{1}{|V|}\sum_{i} Q_i(o_i,a_i)
+ \tfrac{1}{|\mathcal{E}|}\sum_{(i,j)\in\mathcal{E}}
Q_{ij}(o_i,a_i,o_j,a_j).
\end{equation}
where $\mathcal{E}$ denotes the set of edges in the coordination graph.
Although these methods improve coordination during learning, their
representational power remains inherently constrained and susceptible to
\emph{relative overgeneralization} \cite{bohmer2020deep,panait2006biasing}, i.e.,
agents overgeneralize toward locally optimal actions under additive or monotonic
factorizations, leading to miscoordination. Joint values expressed in this form
cannot capture general higher-order coordination without becoming intractable
\cite{castellini2019representational}. See Appendix~\ref{app:pairwise} for a
detailed discussion.

\paragraph{Action-dependent policies.}
Action-dependent policies relax independent factorization by allowing an agent’s
decision to condition on other agents’ actions or action distributions. In
general, such policies take the form
\begin{equation}
\pi_i(a_i \mid o_i, a_{-i})
\quad \text{or} \quad
\pi(\mathbf a \mid \mathbf o)
= \prod_{i=1}^N \pi_i(a_i \mid o_i, \mathbf a_{-i}),
\end{equation}
thereby enlarging the realizable joint policy class beyond $\Pi_{\mathrm{ind}}$.
Prior work instantiates this idea through autoregressive \cite{wen2022multi,fu2022revisiting}
or DAG-based factorizations \cite{ruan2022gcs,liu2023deep}, which generate joint
actions sequentially under a fixed or learned ordering. Recent work on policy
factorization \cite{wang2023more,zhang2021fop,fu2022revisiting,li2025agentmixer}
uses dependent policies during centralized training to guide independent policy
learning and execution. While these approaches improve expressiveness, they
typically require explicit agent orderings, introduce asymmetries, or rely on
conditioning on preceding actions, complicating decentralized execution.

\section{Analysis} 
\label{sec:Analysis}

As defined in Section~2, decentralized execution under IGO assumption \cite{zhang2021fop} restricts policies to the factorized class $\Pi_{\mathrm{ind}}$. Equivalently, for any joint observation $\mathbf o$, the set of joint action distributions that can be executed at test time is limited
to product-form distributions
\begin{equation}
\mathcal P_{\mathrm{ind}}(\mathbf o)
= \left\{ p(\mathbf a) : p(\mathbf a) = \prod_{i=1}^N p_i(a_i) \right\}.
\end{equation}
Throughout this section, we analyze the consequences of this execution constraint. In particular, we distinguish between joint policies that are representable or learnable during centralized training, and those that are executable under decentralized control. Our results characterize coordinated behaviors that fall outside $\mathcal P_{\mathrm{ind}}(\mathbf o)$ and are therefore inaccessible to independently executed policies, regardless of the training procedure or value-function expressiveness. 


\paragraph{Independent Execution Is Structurally Suboptimal.} We first establish that restricting execution to $\Pi_{\mathrm{ind}}$ can be fundamentally suboptimal.

\begin{theorem}[Independent policies are not universally optimal]
\label{thm:ind_suboptimal}
There exist cooperative Dec-POMDPs with deterministic optimal joint policies $\pi^*$ such that $\pi^* \notin \Pi_{\mathrm{ind}}$.
\end{theorem}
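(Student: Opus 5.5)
The plan is to give an explicit single-step construction. The key point is that a deterministic joint policy $\pi^*(\mathbf a\mid\mathbf o)$ is, for each fixed $\mathbf o$, a point mass, and a point mass is trivially a product distribution. So membership in $\mathcal P_{\mathrm{ind}}(\mathbf o)$ pointwise is never the obstruction. The obstruction has to come from the \emph{observation dependence}: $\Pi_{\mathrm{ind}}$ forces each factor $\pi_i(a_i\mid o_i)$ to depend only on agent $i$'s own observation. I therefore want a Dec-POMDP in which the optimal action of one agent is a function of information seen only by another agent.

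\textbf{Construction.} Take $N=2$ and $\mathcal A_1=\mathcal A_2=\{0,1\}$. Let the initial state $s=(b_1,b_2)$ be drawn uniformly from $\{0,1\}^2$, with $o_i=b_i$ deterministically. Set the reward to
\[
\mathcal R(s,\mathbf a)=\mathbf 1[a_1=b_2]\,\mathbf 1[a_2=b_1].
\]
After one step, $\mathcal T$ sends every state to an absorbing state with zero reward, so $J(\pi)$ equals the expected one-step reward. The deterministic joint policy $\pi^*(\mathbf a\mid\mathbf o)=\mathbf 1[a_1=o_2,\ a_2=o_1]$ attains reward $1$ almost surely. Since rewards lie in $[0,1]$, $\pi^*$ is optimal.

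\textbf{Showing $\pi^*\notin\Pi_{\mathrm{ind}}$.} I will prove the stronger statement that every $\pi\in\Pi_{\mathrm{ind}}$ satisfies $J(\pi)\le 1/2$. Under $\pi=\pi_1\pi_2$, the action $a_1$ is drawn from $\pi_1(\cdot\mid b_1)$. Conditional on $b_1$, the bit $b_2$ is still uniform and independent of $a_1$, so $\Pr[a_1=b_2]=1/2$. Then
\[
J(\pi)=\Pr[a_1=b_2,\ a_2=b_1]\le \Pr[a_1=b_2]=\tfrac12<1=J(\pi^*).
\]
Hence no independent policy is optimal, and in particular $\pi^*\notin\Pi_{\mathrm{ind}}$. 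One can also see non-membership directly: if $\pi^*=\pi_1\pi_2$, then $\pi_1(a_1\mid o_1)$ would have to equal $\mathbf 1[a_1=o_2]$ for both values of $o_2$, which is impossible.

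\textbf{Main obstacle.} The only real subtlety is the one above: one must not argue via pointwise product form, since deterministic policies are pointwise products. The separation has to be framed in terms of which observations each factor may condition on. A secondary point is that the definition of the Dec-POMDP tuple requires $\gamma\in(0,1)$, so an infinite-horizon process is needed. The absorbing zero-reward state handles this cleanly.
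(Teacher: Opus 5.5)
Your proof is correct, and it uses a genuinely different construction from the paper's, one that is sounder with respect to the paper's own definitions.

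The paper first considers the pure coordination game and concedes that its deterministic optimum is representable by independent policies. It attributes this to the symmetric optima; your diagnosis, that every point mass is a product, is the actual reason. The paper then adds a latent state $s\in\{0,1\}$ that \emph{neither} agent observes, with reward $\mathbb{I}\{a_1=a_2=s\}$. Its $\pi^*$ plays $a_1=a_2=s$ ``conditioned on the latent state'' and earns $1$, while independent policies earn at most $\tfrac12$.

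That $\pi^*$ conditions on $s$, which is not part of $\mathbf o$, so it is not a joint policy $\pi(\mathbf a\mid\mathbf o)$ in the sense of Section~2. With $\mathbf o$ constant, every joint policy is a single distribution $q$ on $\{0,1\}^2$ with value $\tfrac12 q(0,0)+\tfrac12 q(1,1)\le\tfrac12$. This is attained by the point mass on $(0,0)$, which lies in $\Pi_{\mathrm{ind}}$. The paper's witness therefore separates independent execution from a state-informed oracle, not from joint-observation policies, which is why it ends with the hedge ``in the sense of executable decentralized policies.''

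Your construction places the decisive information inside $\mathbf o$ but splits it across agents. So $\pi^*(\mathbf a\mid\mathbf o)=\mathbf 1[a_1=o_2,\,a_2=o_1]$ is a legitimate deterministic joint policy. Both your value gap ($1$ versus at most $\tfrac12$, using that $b_2$ is independent of $(b_1,a_1)$) and your marginalization argument hold exactly as stated. You also get the stronger and more meaningful conclusion that no member of $\Pi_{\mathrm{ind}}$ is optimal.

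The paper's version offers a slightly simpler calculation and an intuitive ``coordinate on an unseen target'' story. Yours gives a separation that actually matches the theorem's definitions and mirrors the private-signal structure of the Top-$K$ experiments.
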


\textbf{Proof.} See Appendix~\ref{app:ind}. \qed

This result is not about optimization difficulty or credit assignment, but about representational limits. Independent policies cannot enforce hard coordination constraints that depend on the joint action $\mathbf a$ as a whole, such as requiring exactly $K$ agents to act, selecting a unique committing agent, or satisfying global action patterns. Such constraints define subsets of $\mathcal A$ that are not decomposable into independent marginals and so cannot be represented by policies in $\Pi_{\mathrm{ind}}$.

\paragraph{Value Decomposition Does Not Remove the Policy-Side Bottleneck.} Value-decomposition addresses coordination by coupling agents during training through a centralized joint action-value function $Q_{\mathrm{tot}}(\mathbf o,\mathbf a)$. However, decentralized execution remains independent: agents select actions greedily with respect to local utilities,
\begin{equation}
\mathbf a(\mathbf o)
= \bigl(\arg\max_{a_1} Q_1(o_1,a_1), \dots, \arg\max_{a_N} Q_N(o_N,a_N)\bigr).
\end{equation}
So, execution is still restricted to joint actions in $\mathcal P_{\mathrm{ind}}(\mathbf o)$.

\begin{theorem}[Value--policy mismatch]
\label{thm:vd_mismatch}
There exist cooperative Dec-POMDPs and centralized joint action-value functions $Q_{\mathrm{tot}}(\mathbf o,\mathbf a)$ such that:
\begin{enumerate}[noitemsep, topsep=0pt, partopsep=0pt, parsep=0pt]
    \item $Q_{\mathrm{tot}}$ admits an exact representation by a value-decomposition model, and
    \item the optimal joint action
    \[
    \mathbf a^*(\mathbf o) = \arg\max_{\mathbf a} Q_{\mathrm{tot}}(\mathbf o,\mathbf a)
    \]
    cannot be realized by decentralized greedy execution under independent policies.
\end{enumerate}
\end{theorem}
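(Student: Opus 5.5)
We prove the statement with an explicit single-step example: a stateless cooperative game (a one-shot Dec-POMDP with $\gamma$ irrelevant and horizon one), two agents, and uninformative observations. Take $N=2$ and $\mathcal A_1=\mathcal A_2=\{0,1\}$. Let each agent receive the same constant observation, so any decentralized policy is a fixed choice per agent. The construction must make two things hold at once: $Q_{\mathrm{tot}}$ is exactly representable by a value-decomposition model, yet its maximizer cannot be produced by per-agent greedy selection.

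The cleanest candidate is a coordination-graph decomposition, which the preliminaries list among value-decomposition models:
\[
Q_{\mathrm{tot}}(\mathbf o,\mathbf a)=\tfrac12\bigl(Q_1(o_1,a_1)+Q_2(o_2,a_2)\bigr)+Q_{12}(o_1,a_1,o_2,a_2).
\]
Set the unaries to $Q_1(\cdot,1)=Q_2(\cdot,1)=1$ and $Q_i(\cdot,0)=0$. Set the pairwise term to $Q_{12}(1,1)=-10$ and $0$ otherwise. The reward is $\mathcal R(\mathbf a)=Q_{\mathrm{tot}}(\mathbf a)$, which gives the values $Q_{\mathrm{tot}}(0,0)=0$, $Q_{\mathrm{tot}}(1,0)=Q_{\mathrm{tot}}(0,1)=\tfrac12$, and $Q_{\mathrm{tot}}(1,1)=-9$. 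This is the anti-coordination, ``exactly one agent acts'' pattern from the discussion following Theorem~\ref{thm:ind_suboptimal}. Item~1 holds by construction, and $Q_{\mathrm{tot}}$ is also the true optimal action-value of this bandit.

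For item~2 there are two optimal joint actions, $(1,0)$ and $(0,1)$. Decentralized greedy execution uses the local utilities: $\arg\max_{a_i}Q_i(o_i,a_i)=1$ for both agents, so it outputs $(1,1)$, which is the worst joint action. We then argue this is forced rather than an artifact of this particular $Q_i$.

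\emph{Symmetry argument.} The two agents have identical observations and, by the symmetry of the game, identical utilities. Any deterministic tie-breaking that depends only on $(o_i,Q_i)$ therefore picks the same action for both agents. This yields $(0,0)$ or $(1,1)$, never an optimal joint action.

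\emph{Stochastic case.} A symmetric product distribution $p(a_1)p(a_2)$ with $p(1)=q$ puts mass $2q(1-q)\le \tfrac12$ on the optimal set. It therefore cannot concentrate on $\{(1,0),(0,1)\}$, so the optimal coordinated behavior lies outside $\mathcal P_{\mathrm{ind}}$ for exchangeable agents.

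The main obstacle is the phrasing of item~2. A skeptic could note that asymmetric utilities (e.g.\ $Q_1$ preferring $1$ and $Q_2$ preferring $0$) would realize $(1,0)$ greedily. The theorem is existential, though, and the claim concerns the given decomposed $Q_{\mathrm{tot}}$, whose local utilities are fixed by the representation. I would handle this by stating the result for the specified decomposition and strengthening it with the symmetry remark.

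If a skeptic demands that \emph{every} exact decomposition fail, I would instead take a monotonic VDN-style representation. That route is more delicate, because any exact additive $Q_{\mathrm{tot}}$ has an argmax that is greedily realizable. This is why the pairwise coordination-graph class is the one to use here.
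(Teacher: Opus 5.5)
Your proposal is correct and takes essentially the paper's route: a one-shot, two-agent, binary-action game with constant observations, represented exactly by a single-edge coordination graph whose unary utilities both favor action $1$ while the pairwise term moves the joint maximizer elsewhere, so greedy execution on the unaries returns the suboptimal $(1,1)$ (the paper uses a pairwise bonus on $(0,0)$ to make it the unique optimum; you penalize $(1,1)$ so that $(1,0)$ and $(0,1)$ are optimal). Your extras are sound and slightly sharpen the paper's argument: with asymmetric optima, failure persists for every symmetric decomposition under identity-agnostic deterministic execution, whereas the paper's witness is rescued by unaries favoring $0$; your IGM remark also correctly explains why the witness needs a non-monotonic, coordination-graph-type decomposition, and the monotonic route is in fact impossible rather than merely delicate.
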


\textbf{Proof.} See Appendix~\ref{app:vd}. \qed

Theorem~\ref{thm:vd_mismatch} highlights a fundamental mismatch between value expressiveness and policy executability. Even when the centralized value function correctly identifies the optimal joint action, decentralized greedy execution is constrained to select actions independently. Increasing the expressiveness of $Q_{\mathrm{tot}}$ alone therefore does not enlarge the set of joint behaviors that can be executed.

\paragraph{The Policy-Side Bottleneck.} Consider that $N$ agents with binary actions must ensure that exactly $K$ select action $1$ (e.g., exactly $2$ out of $5$ drones ascend to high altitude for communication relay). Under an independent policy where each agent selects action $1$ with probability $p$, the probability of achieving exactly $K$ selections is
\begin{equation}
\Pr\!\left(\sum_{i=1}^N a_i = K\right) = \binom{N}{K} p^K (1-p)^{N-K} 
\label{eq:optimal_ind}
\end{equation}
Even at the optimal choice $p^\star = K/N$, this probability is strictly less than $1$. For $N=5$ and $K=2$, the maximum achievable probability is only $0.3456$, implying that the team fails nearly $65\%$ of the time despite optimal independent policies. A centralized controller, by contrast, can deterministically select exactly $K$ agents and achieve $100\%$ success. This performance gap does not arise from learning difficulty or credit assignment, but from a representational limitation: the optimal joint action distribution, which concentrates probability mass on configurations with exactly $K$ active agents, cannot be factored into a product of independent marginals.

We refer to this as a \textit{policy-side bottleneck}. Variants of this analysis have been recognized in different forms in prior work, for instance, through assumptions such as independent global maxima or optimality, or relative overgeneralization, but typically as side effects of specific algorithms or learning dynamics \cite{zhang2021multi,fu2022revisiting,bohmer2020deep,zhang2021fop,wang2023more}. Here, we abstract these phenomena into an explicit, execution-level characterization, isolating the structural limitations imposed by independent policies themselves. Our results are intended to characterize coordinated behaviors that fall outside $\mathcal P_{\mathrm{ind}}(\mathbf o)$ and are inaccessible to independently executed policies, regardless of the training procedure or value function expressiveness.

 

\section{Action-Graph Policies} 
\label{sec:Methodology} 

We now address the limitations discussed in Section~\ref{sec:Analysis} by enlarging the decentralized policy class itself.
Our approach introduces a structured notion of \textit{coordination context} over actions
and instantiates it through \textit{Action Graph Policies}. 

\paragraph{Coordination Contexts.}
We formalize coordination as a property of the joint action structure rather than of individual agents.
In many cooperative tasks, agents must select actions that collectively satisfy global constraints
(e.g., cardinality, exclusivity, or synchronization),
which cannot be verified or enforced by inspecting any agent’s action in isolation.
As a result, effective coordination requires reasoning about relationships between actions across agents.

Let $\mathcal{I} = \{1,\dots,N\}$ denote the set of agents and
$\mathcal{A}_i$ the discrete action space of agent $i$.
We define a global action-dependency graph
\begin{equation}
\mathcal{G}_{\mathcal{A}} = (\mathcal{V}, \mathcal{E}),
\qquad
\mathcal{V} = \{u_a^i : i \in \mathcal{I},\, a \in \mathcal{A}_i\},
\end{equation}
whose nodes correspond to available actions of all agents (as in Fig.~\ref{fig:agp}).
That is, $\mathcal{G}_{\mathcal{A}}$ is constructed over the union of action sets $\bigcup_{i\in\mathcal{I}}\mathcal{A}_i$,
and contains one node $u_a^i$ for each agent--action choice $(i,a)$, rather than a separate graph per agent.
Edges encode coordination-relevant dependencies between action choices of different agents.
This graph provides a representation of the joint action structure without enumerating explicit joint actions. For each agent $i$, we define its coordination context as a structured summary $\boldsymbol{\kappa}_i$, obtained from the restriction of $\mathcal{G}_{\mathcal{A}}$ to agent $i$’s action nodes. Formally,
\begin{equation}
\boldsymbol{\kappa}_i
=
\Psi\!\left(
\mathcal{G}_{\mathcal{A}} \big|_{\mathcal{V}_i}
\right),
\qquad
\mathcal{V}_i = \{u_a^i : a \in \mathcal{A}_i\},
\label{eq:context_graph}
\end{equation}
where $\Psi$ is a permutation-invariant mapping that aggregates coordination-relevant
information from the action-dependency graph.
Intuitively, $\boldsymbol{\kappa}_i$ summarizes how agent $i$’s available actions interact
with other agents’ action choices, capturing, for example, which actions are mutually exclusive,
which require simultaneous activation, or which subsets of actions jointly satisfy global constraints. $\boldsymbol{\kappa}_i$ depends on action--action relationships rather than on explicit joint actions,
and can be computed prior to action selection. Using coordination contexts, we define context-conditioned policies of the form
\begin{equation}
\pi_i(a_i \mid o_i, \boldsymbol{\kappa}_i),
\label{eq:context_policy}
\end{equation}
where $o_i$ denotes agent $i$’s local observation.
This formulation preserves simultaneous execution while allowing each agent’s policy
to condition on structured information about joint actions.
Independent policies are recovered as the special case where
$\mathcal{G}_{\mathcal{A}}$ contains no edges, making $\boldsymbol{\kappa}_i$ a constant. 

\begin{figure*}[ht]
  \vskip 0.2in
  \begin{center}
    \centerline{\includegraphics[width=0.9\linewidth]{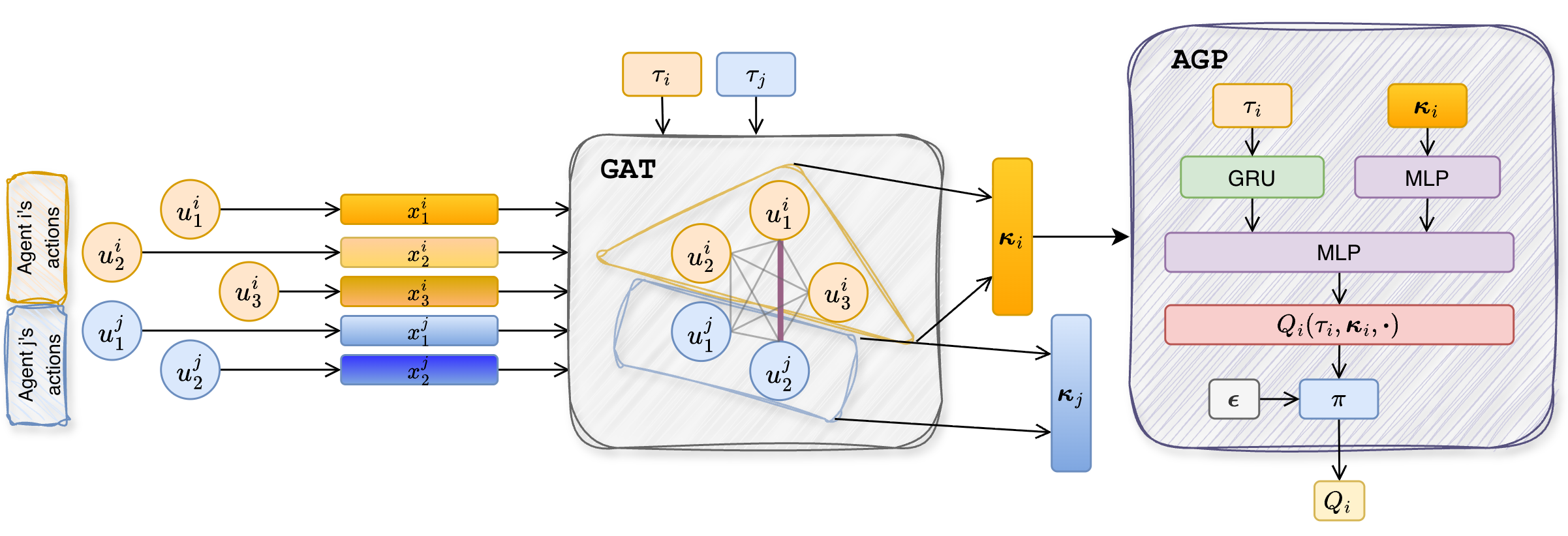}}
    \caption{Illustration of Action-Graph Policies (AGP). The global action graph $\mathcal{G}_{\mathcal{A}}$
    contains one node $u_a^i$ for each available action $a \in \mathcal{A}_i$ of every agent $i$,
    with edges representing learned coordination dependencies across action choices.
    Coordination contexts $\boldsymbol{\kappa}_i$ are constructed by aggregating information from this action graph
    prior to decentralized action selection.}
    \label{fig:agp}
  \end{center}
\end{figure*} 

\paragraph{Action-Graph Policies.}
We now present Action-Graph Policies (AGP) as a concrete instantiation
of context-conditioned policies in which coordination contexts are constructed
via learned action-dependency graphs (see  Algorithm~\ref{alg:agp-algo} and Appendix~\ref{app:detailed_algorithm}). 
We treat individual agent--action choices as first-class entities.
For each agent $i \in \mathcal{I}$ and action $a \in \mathcal{A}_i$,
we define an action node $u_a^i \in \mathcal{V}$.
Each node is associated with an action-centric representation
\begin{equation}
\mathbf{x}_a^i = \phi(o_i, a),
\end{equation}
where $\phi$ is a shared encoder combining local observation and action identity.
This construction is permutation-invariant with respect to agent indexing.
AGP instantiates $\mathcal{G}_{\mathcal{A}}$ as a fully connected directed graph
over the action nodes $\mathcal{V} = \{u_a^i\}$ and learns coordination dependencies via attention-based message passing \cite{veličković2018graph}. 
For each action node $u_a^i$, we compute an updated representation
\begin{equation}
\mathbf{h}_a^i
=
\sum_{u_b^j \in \mathcal{V}}
\alpha_{(i,a),(j,b)} \, \mathbf{x}_b^j,
\label{eq:action_mp}
\end{equation}
where attention weights $\alpha_{(i,a),(j,b)}$ are learned and normalized across all nodes in $\mathcal{V}$.
Stacking multiple message-passing layers allows higher-order action dependencies to be captured.
The coordination context for agent $i$ is obtained by aggregating over its action nodes:
\begin{equation}
\boldsymbol{\kappa}_i
=
\mathrm{Pool}\big(\{\mathbf{h}_a^i : a \in \mathcal{A}_i\}\big),
\label{eq:context_pool}
\end{equation}
where $\mathrm{Pool}$ is a permutation-invariant operator.

An Action-Graph Policy then instantiates~\eqref{eq:context_policy} by defining a context-conditioned scoring function
$f_\theta(o_i, a_i, \boldsymbol{\kappa}_i)$ over agent $i$'s available actions.
This function can be instantiated either as a stochastic policy or as an
action-value function, depending on the learning paradigm.
In the value-based instantiation used throughout this work, AGP induces
a context-conditioned action-value function $Q_i(o_i,\boldsymbol{\kappa}_i,a_i)$,
and decentralized execution proceeds via $\epsilon$-greedy action selection:
\begin{equation}
a_i \leftarrow
\begin{cases}
\arg\max_{a \in \mathcal{A}_i} Q_i(o_i,\boldsymbol{\kappa}_i,a), & \text{w.p. } 1-\epsilon,\\
\text{Uniform}(\mathcal{A}_i), & \text{w.p. } \epsilon.
\end{cases}
\label{eq:agp_eps_greedy}
\end{equation}



\paragraph{Induced Joint Policy.}
AGP induces the joint policy
\begin{equation}
\pi_{\mathrm{AGP}}(\mathbf{a}\mid\mathbf{o})
=
\prod_{i=1}^N
\pi_i(a_i \mid o_i, \boldsymbol{\kappa}_i),
\label{eq:agp_joint}
\end{equation}
where dependencies between agents’ actions arise implicitly through the shared
action-dependency graph used to construct the coordination contexts.

\begin{algorithm}[t]
\caption{Action Graph Policies (AGP)} 
\label{alg:agp-algo}
\begin{algorithmic}[1]
\REQUIRE Local observations $\mathbf{o}_t=(o_1^t,\ldots,o_N^t)$ and available-action masks $\{\mathrm{avail}_i^t(\cdot)\}_{i=1}^N$
\ENSURE Joint action $\mathbf{a}_t=(a_1^t,\ldots,a_N^t)$
\STATE Form global action-node set $\mathcal{V}=\{u_a^i : i\in\{1,\ldots,N\},\, a\in\mathcal{A}_i\}$
\FOR{each node $u_a^i \in \mathcal{V}$}
    \STATE $\mathbf{x}_a^i \leftarrow \phi(o_i^t,a)$
\ENDFOR
\STATE $\{\mathbf{h}_a^i\}_{u_a^i\in\mathcal{V}} \leftarrow \mathrm{MP}_\theta(\{\mathbf{x}_a^i\}_{u_a^i\in\mathcal{V}}, \mathcal{G}_{\mathcal{A}})$ \hfill{\small($L$ attention layers)}
\FOR{$i=1,\ldots,N$}
    \STATE $\boldsymbol{\kappa}_i \leftarrow \mathrm{Pool}(\{\mathbf{h}_a^i : a\in\mathcal{A}_i\},\,\mathrm{avail}_i^t)$ \hfill{\small(masked pool)}
    \STATE $a_i^t \leftarrow \epsilon\text{-}\mathrm{Greedy}\Big(\{Q_i(\tau_i^t,\boldsymbol{\kappa}_i,a)\}_{a\in\mathcal{A}_i},\,\mathrm{avail}_i^t\Big)$
\ENDFOR
\STATE \textbf{return} $\mathbf{a}_t$
\end{algorithmic}
\end{algorithm}

\paragraph{Properties of Action Graph Policies.}
We now establish two fundamental properties of the joint policy class induced by AGPs.
Together, these results formalize how AGP resolves the policy-side bottleneck identified in Section~\ref{sec:Analysis}
by enlarging the class of decentralized policies that can realize coordinated joint behavior.

\begin{theorem}[Strict Joint-Policy Expressivity]
\label{thm:agp_strict_expressivity}
Let $\Pi_{\mathrm{ind}}$ denote the class of decentralized joint policies that factorize independently as
\(
\pi(\mathbf{a}\mid\mathbf{o}) = \prod_{i=1}^N \pi_i(a_i \mid o_i).
\)
Then the class of joint policies induced by Action Graph Policies, $\Pi_{\mathrm{AGP}}$, satisfies
\[
\Pi_{\mathrm{ind}} \subsetneq \Pi_{\mathrm{AGP}} .
\]
That is, AGP strictly generalizes independent policies.
\end{theorem}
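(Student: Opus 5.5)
The proof splits into an inclusion and a separation. For the inclusion $\Pi_{\mathrm{ind}} \subseteq \Pi_{\mathrm{AGP}}$, we use the remark after \eqref{eq:context_policy}: independent policies are the special case in which $\mathcal{G}_{\mathcal{A}}$ contributes no information. Concretely, given any $\pi = \prod_i \pi_i(a_i\mid o_i) \in \Pi_{\mathrm{ind}}$, we choose the message-passing parameters so that the pooled context $\boldsymbol{\kappa}_i$ is constant. Two choices work: zero value projections in every attention layer, or a scoring function $f_\theta(o_i,a_i,\boldsymbol{\kappa}_i)$ that ignores its $\boldsymbol{\kappa}_i$ argument. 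We then set $f_\theta(o_i,\cdot,\boldsymbol{\kappa}_i)$ to reproduce $\pi_i(\cdot\mid o_i)$. In the stochastic instantiation this means taking logits $\log \pi_i(a\mid o_i)$. In the value-based instantiation, for deterministic independent policies, it means any $Q_i$ whose argmax is the prescribed action. Then \eqref{eq:agp_joint} equals $\pi$ exactly. This step is routine, but I will state explicitly which instantiation of $\Pi_{\mathrm{AGP}}$ is meant, since the class must be defined so that $f_\theta$ is rich enough (e.g.\ a universal function class on the finite action set).

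For strictness, the plan is to exhibit one observation-to-joint-action mapping that AGP realizes but no element of $\Pi_{\mathrm{ind}}$ does. The key point is that $\boldsymbol{\kappa}_i$ is computed by attention over \emph{all} action nodes $u_b^j$, and $\mathbf{x}_b^j=\phi(o_j,b)$. Hence $\boldsymbol{\kappa}_i$ is a function of the full joint observation $\mathbf{o}$, not just of $o_i$, so agent $i$'s action can depend on $o_{-i}$.

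I would use the simplest witness: $N=2$, binary actions, $\mathcal{O}_1=\mathcal{O}_2=\{0,1\}$, and the deterministic joint policy $a_1=a_2=o_1\oplus o_2$. I would show this is not in $\Pi_{\mathrm{ind}}$: a deterministic product policy has $a_1$ determined by $o_1$ alone, yet fixing $o_1=0$ and varying $o_2$ forces $a_1$ to change. (Alternatively, the exactly-$K$ construction behind Theorem~\ref{thm:ind_suboptimal} can be reused, with observations that break the symmetry and designate which agents act.)

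To show AGP realizes the witness, I would construct explicit parameters. Let $\phi(o_j,b)$ encode $(o_j,b,j)$, use a single attention layer with uniform weights (all attention logits zero), and let the values carry $o_j$. Then each $\mathbf{h}_a^i$ contains the average $(o_1+o_2)/2$, and mean pooling gives $\boldsymbol{\kappa}_i=(o_1+o_2)/2$. Finally I would pick $Q_i(o_i,\boldsymbol{\kappa}_i,a)$ with argmax $a=1$ iff $\boldsymbol{\kappa}_i=1/2$; this is possible because the context values are finitely many and distinct. With $\epsilon=0$ at execution, $\pi_{\mathrm{AGP}}$ equals the target policy.

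The main obstacle is interpretive rather than technical. Read literally, \eqref{eq:agp_joint} is a product, and one must argue that conditioning on $\boldsymbol{\kappa}_i(\mathbf{o})$ makes it a strictly larger class, rather than a product of marginals of $o_i$ alone. I would handle this by treating policies as maps $\mathbf{o}\mapsto\Delta(\mathcal{A})$ and fixing the class $\Pi_{\mathrm{AGP}}$ as ranging over all parameters $(\phi,\text{attention},\mathrm{Pool},f_\theta)$. The expressivity gain then comes precisely from $\boldsymbol{\kappa}_i$ injecting $o_{-i}$ into agent $i$'s conditional.
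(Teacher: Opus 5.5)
Your inclusion step matches the paper's (edgeless graph, constant $\boldsymbol{\kappa}_i$). Your caveat is also correct: the value-based greedy instantiation recovers only deterministic (or $\epsilon$-mixed) independent policies, a point the paper omits.

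For strictness you take a genuinely different route. The paper uses a stateless two-agent game with null observations and reward $\mathbb{I}\{a_1=a_2\}$. It adds edges $u_0^1$--$u_0^2$ and $u_1^1$--$u_1^2$ and asserts that, conditioned on $\boldsymbol{\kappa}_i$, both agents pick the same action. You instead use the observation-dependent target $a_1=a_2=o_1\oplus o_2$. You exploit the fact that $\boldsymbol{\kappa}_i$ is computed from all nodes $\phi(o_j,b)$ and is therefore a function of the whole $\mathbf{o}$.

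Your route is the one that actually separates the classes under the architecture as defined. With constant observations, every $\boldsymbol{\kappa}_i$ is a deterministic constant, so \eqref{eq:agp_joint} is a product of fixed marginals and lies in $\Pi_{\mathrm{ind}}$. Concretely, the point mass on $(0,0)$ that the paper's construction yields is itself an independent policy. The symmetric mixture $\tfrac{1}{2}\delta_{(0,0)}+\tfrac{1}{2}\delta_{(1,1)}$ is genuinely non-product, but realizing it would need shared randomness in the contexts, which deterministic attention and pooling do not provide. The random index $J$ in Theorem~\ref{thm:kl_separation} has the same issue.

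Had the paper's witness worked, it would show a gain from correlation alone, independent of information. Yours gives an explicit, checkable construction: uniform attention, mean pooling giving $\boldsymbol{\kappa}_i=(o_1+o_2)/2$, and a threshold $Q_i$. It also makes precise that, with deterministic contexts, the extra expressivity is exactly the ability of agent $i$ to condition on $o_{-i}$. Defining $\Pi_{\mathrm{AGP}}$ as the set of maps $\mathbf{o}\mapsto\Delta(\mathcal{A})$ over all parameter choices is what makes the statement well-posed in the first place.
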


\textbf{Proof.}
See Appendix~\ref{app:proof_expressivity} and \ref{app:proof_kl_separation}. 

This establishes that AGP enlarges the executable policy class in decentralized MARL. 
Such strict expressivity gain over independent policies formalizes how AGP resolves the policy-side bottleneck: it enables execution of coordinated joint behaviors that are provably unreachable under independent factorization, even when training is fully centralized. 




\begin{theorem}[KL-Optimal Approximation of Centralized Joint Policies]
\label{thm:kl_optimal_approx}
Fix any Dec-POMDP and any centralized joint policy $\pi^\star(\mathbf{a}\mid \mathbf{o})$
with full support (see Appendix~\ref{app:proof_kl_optimal_approx}).
Let $\Pi_{\mathrm{ind}}$ be the class of independently factorized joint policies
$\pi(\mathbf{a}\mid\mathbf{o})=\prod_{i=1}^N \pi_i(a_i\mid o_i)$,
and let $\Pi_{\mathrm{AGP}}$ be the class of joint policies induced by Action Graph Policies.
Define the best achievable forward KL divergences
\[
D_{\mathrm{ind}}
:= \inf_{\pi \in \Pi_{\mathrm{ind}}} \;
\mathbb{E}_{\mathbf{o}\sim d}\!\left[ \mathrm{KL}\!\big(\pi^\star(\cdot\mid\mathbf{o}) \,\|\, \pi(\cdot\mid\mathbf{o})\big) \right],
\]
\[
D_{\mathrm{AGP}}
:= \inf_{\pi \in \Pi_{\mathrm{AGP}}} \;
\mathbb{E}_{\mathbf{o}\sim d}\!\left[ \mathrm{KL}\!\big(\pi^\star(\cdot\mid\mathbf{o}) \,\|\, \pi(\cdot\mid\mathbf{o})\big) \right],
\]
where $d$ is any reference distribution over joint observations (e.g., the occupancy measure under $\pi^\star$).
Then:
\[
D_{\mathrm{AGP}} \;\le\; D_{\mathrm{ind}}.
\]
Moreover, there exist Dec-POMDPs (and optimal $\pi^\star$) for which the inequality is strict:
\[
D_{\mathrm{AGP}} \;<\; D_{\mathrm{ind}}.
\]
\end{theorem}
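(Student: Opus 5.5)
The plan has two parts. The weak inequality follows from inclusion of policy classes. Strictness follows from an explicit separating instance.

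\textbf{Weak inequality.} By Theorem~\ref{thm:agp_strict_expressivity}, $\Pi_{\mathrm{ind}} \subseteq \Pi_{\mathrm{AGP}}$. Concretely, take the AGP whose attention weights make $\boldsymbol{\kappa}_i$ constant, as in the edgeless case noted after~\eqref{eq:context_policy}. This recovers any $\prod_i \pi_i(a_i\mid o_i)$.

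The objective $\pi \mapsto \mathbb{E}_{\mathbf o\sim d}[\mathrm{KL}(\pi^\star\|\pi)]$ is the same functional on both classes. An infimum over a superset is no larger than the infimum over the subset, so $D_{\mathrm{AGP}}\le D_{\mathrm{ind}}$ holds with no further work.

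Full support of $\pi^\star$ is used only to keep $D_{\mathrm{ind}}<\infty$. Since $\pi^\star$ has full support, the product of its marginals also has full support, so that product is a feasible point with finite KL.

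\textbf{Strict inequality.} I would first characterize $D_{\mathrm{ind}}$ exactly. For fixed $\mathbf o$, minimizing forward KL over product distributions is moment projection. Its unique minimizer is the product of the marginals $\pi^\star_i(a_i\mid\mathbf o)$, and the minimum value is the total correlation
\[
\mathrm{TC}(\pi^\star(\cdot\mid\mathbf o))=\sum_i H(\pi^\star_i)-H(\pi^\star).
\]
There is one subtlety. In $\Pi_{\mathrm{ind}}$, agent $i$ sees only $o_i$, so the best $\pi_i(\cdot\mid o_i)$ is the marginal of $\pi^\star$ averaged under $d(\cdot\mid o_i)$. This only increases $D_{\mathrm{ind}}$, so I would simply lower bound $D_{\mathrm{ind}}$ by $\mathbb{E}_d[\mathrm{TC}]$.

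\textbf{Separating instance.} I would then use a single-observation (stateless) instance of the exactly-$K$ game from Section~\ref{sec:Analysis}, for example $N=2$ binary agents with reward $1$ iff $a_1\neq a_2$. Take $\pi^\star$ to be a softened anti-coordination policy with full support, for example mass $(1-\delta)/2$ on each of $(0,1)$ and $(1,0)$ and $\delta/2$ on each of $(0,0)$ and $(1,1)$. For small $\delta$ this policy is near-optimal, and it is optimal in the entropy-regularized sense. Its total correlation is strictly positive, so $D_{\mathrm{ind}}>0$.

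It then remains to show $D_{\mathrm{AGP}}<D_{\mathrm{ind}}$. The natural route is to exhibit an AGP whose induced joint policy puts more mass on the anti-coordinated pairs than the product of marginals does. I would reuse the separating construction from Appendix~\ref{app:proof_expressivity}: a context $\boldsymbol{\kappa}_i$ computed from the shared action graph that breaks the symmetry between agents. For example, $\boldsymbol{\kappa}_i$ can encode agent $i$'s index or rank through the attention over action nodes, which is possible because $\phi$ and the attention can distinguish nodes. Then $\pi_1$ concentrates on action $0$ and $\pi_2$ on action $1$. The resulting joint policy is $(1-\eta)^2$ on $(0,1)$, and its KL to $\pi^\star$ can be computed in closed form. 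Choosing $\eta$ appropriately should beat the product-of-marginals value $\log 2-\ldots$.

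\textbf{Main obstacle.} The key difficulty is exactly this step. Formally, the induced joint policy~\eqref{eq:agp_joint} is still a product given the contexts. So if the contexts are deterministic functions of $\mathbf o$, each AGP policy at a fixed $\mathbf o$ is again a product distribution, and its KL to $\pi^\star$ is at least the total correlation. The gain therefore has to come from one of two sources:
\begin{enumerate}
\item agents conditioning on global information ($\boldsymbol{\kappa}_i$ depends on all $o_j$), which beats $\Pi_{\mathrm{ind}}$ when $d(\cdot\mid o_i)$ is non-degenerate; or
\item shared randomness in the context.
\end{enumerate}
I would try source (1) first. Build a partially observable instance where $\pi^\star(\cdot\mid\mathbf o)$ is nearly deterministic but varies with $o_{-i}$. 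Then $\Pi_{\mathrm{ind}}$ suffers averaged marginals, while AGP, with access to $\mathbf o$ through $\boldsymbol{\kappa}_i$, matches the per-$\mathbf o$ marginals. This yields $D_{\mathrm{AGP}}\le\mathbb{E}_d[\mathrm{TC}]<D_{\mathrm{ind}}$. If that is insufficient, I would fall back to injecting a shared random seed into the pooled context, which realizes mixtures of products.
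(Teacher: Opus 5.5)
Your weak inequality is the paper's Step~1: an infimum over the superset $\Pi_{\mathrm{AGP}}\supseteq\Pi_{\mathrm{ind}}$ is no larger. It holds in $[0,\infty]$ with no support assumption, and the uniform product policy already keeps $D_{\mathrm{ind}}$ finite. For strictness your route genuinely differs from the paper's, and your ``Main obstacle'' paragraph is the decisive point.

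The paper uses the stateless exactly-one game with $\pi^\star$ uniform on the one-hot joint actions. It asserts $D_{\mathrm{AGP}}=0$ by ``inducing a symmetric choice over which agent activates''; its companion separation proof does this through a shared random index $J$. That is a mixture of products, i.e.\ your source~(2), and \eqref{eq:agp_joint} does not provide it when the contexts are deterministic functions of $\mathbf o$. With a single observation the contexts are constants, so every AGP policy is a product, and $\Pi_{\mathrm{ind}}$ already contains every product. Hence $D_{\mathrm{AGP}}=D_{\mathrm{ind}}=\mathrm{TC}(\pi^\star)$ on that instance, whose $\pi^\star$ also lacks full support. The same fact kills your own ``Separating instance'' paragraph. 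The symmetry-broken policy is a product in $\Pi_{\mathrm{ind}}$, and the M-projection is already the KL-best product, so no choice of $\eta$ beats it. Drop that paragraph and commit to source~(1).

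Source~(1) is correct, and it becomes exact through the identity $\mathrm{KL}(p\,\|\,\prod_i q_i)=\mathrm{TC}(p)+\sum_i\mathrm{KL}(p_i\,\|\,q_i)$. This gives $D_{\mathrm{ind}}=\mathbb{E}_d[\mathrm{TC}]+\sum_i I(a_i;o_{-i}\mid o_i)$. Here $I$ is the conditional mutual information under $\mathbf o\sim d$, $a_i\sim\pi^\star_i(\cdot\mid\mathbf o)$, and the minimum is attained at $\pi_i(\cdot\mid o_i)=\mathbb{E}_{d(\cdot\mid o_i)}[\pi^\star_i(\cdot\mid\mathbf o)]$, which quantifies your averaged-marginals remark. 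Meanwhile $D_{\mathrm{AGP}}=\mathbb{E}_d[\mathrm{TC}]$ as soon as AGP can represent $\prod_i\pi^\star_i(a_i\mid\mathbf o)$.

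A concrete witness completes your sketch. Take $N=2$, $o_1,o_2$ i.i.d.\ uniform bits, $R=\mathbb{I}\{a_1=a_2=o_1\oplus o_2\}$, and $\pi^\star\propto\exp(\beta R)$; this $\pi^\star$ has full support and is optimal for the entropy-regularized objective. Each marginal puts mass $m=(e^\beta+1)/(e^\beta+3)\neq\tfrac12$ on $o_1\oplus o_2$, which is independent of $o_i$. So $I(a_i;o_{-i}\mid o_i)=\log 2-h(m)>0$, with $h$ the binary entropy. For realizability, take one-hot $\phi(o,a)=(e_o,e_a)$, uniform attention and mean pooling. Then $\boldsymbol{\kappa}_i=\bigl(\tfrac12(e_{o_1}+e_{o_2}),\tfrac12\mathbf 1\bigr)$ encodes the multiset $\{o_1,o_2\}$, hence $o_{-i}$ given $o_i$. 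A stochastic head (the AGP-PG instantiation) then reproduces $\pi^\star_i(\cdot\mid\mathbf o)$.

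Your route gives a proof that is valid under the paper's literal definition of $\Pi_{\mathrm{AGP}}$ and respects the full-support hypothesis. It also shows that the gain comes from observation sharing through message passing, not from correlating actions, because $D_{\mathrm{AGP}}\ge\mathbb{E}_d[\mathrm{TC}]$ on every instance. Closing the total-correlation gap, which is what the paper's witness is meant to exhibit, needs your source~(2): a common random seed in the contexts, which extends AGP beyond its definition.
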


\textbf{Proof.} See Appendix~\ref{app:proof_kl_optimal_approx}.

Theorem~\ref{thm:kl_optimal_approx} formalizes a precise notion of policy optimality for AGP: among all decentralized policies, AGP admits joint policies that are at least as close in forward KL to an optimal centralized policy as the best independently factorized approximation, and can be strictly closer. This shows that AGP reduces the value--policy mismatch by enlarging the executable policy family, rather than relying on more accurate centralized value functions whose greedy decentralized execution remains independent.

\paragraph{Learning Action Graph Policies.} As illustrated in Fig.~\ref{fig:agp}, learning proceeds by first encoding each agent’s available actions into action-centric representations $\{\mathbf{x}_a^i\}_{u_a^i \in \mathcal{V}}$, which are processed through shared graph attention layers to construct coordination contexts $\boldsymbol{\kappa}_i$. These contexts are differentiable functions of the action representations and attention parameters, allowing gradients to propagate through the entire action-graph construction. We consider a value-based instantiation of AGP. Each agent learns a context-conditioned action-value function $Q_i(\tau_i, \boldsymbol{\kappa}_i, a_i)$, where $\tau_i$ denotes the agent’s action-observation history. The AGP parameters are optimized by minimizing a centralized temporal-difference loss, while execution uses decentralized greedy or $\epsilon$-greedy action selection. Coordination is learned implicitly through gradients flowing into the action-graph attention mechanism, enabling the model to discover which action--action dependencies are relevant for maximizing return.


At execution time, agents act in parallel using a fixed, learned message-passing mechanism, without any centralized controller or joint-action inference. Each agent computes its coordination context $\boldsymbol{\kappa}_i$ from the action graph and selects an action via its local policy $\pi_i(a_i \mid o_i, \boldsymbol{\kappa}_i)$, without observing other agents’ realized actions. This execution model is analogous to coordination-graph methods (e.g., DCG \cite{bohmer2020deep}), which permit learned message passing at test time but prohibit centralized action selection. 

This section establishes Action Graph Policies as a principled resolution to the policy-side bottleneck in decentralized MARL.
By representing coordination explicitly over actions and embedding it directly into the executable policy, AGP enlarges the class of realizable joint behaviors beyond independent execution.
Theoretical guarantees show that this expanded policy class is both strictly more expressive and closer to optimal joint behavior than value-based coordination alone.
In the following section, we empirically validate these claims and demonstrate that AGP yields consistent performance gains on coordination-critical benchmarks.

\section{Experiments and Results} 
\label{sec:Experiments} 

We now investigate whether these AGP's insights translate to empirical gains. We organize the evaluation into two parts. Section~\ref{sec:matrix_games} presents controlled experiments on matrix games specifically designed to isolate the policy-side bottleneck. These games admit clean theoretical analysis and allow us to verify that observed performance gaps match theoretical predictions. Section~\ref{sec:mpe} evaluates AGP on diverse multi-agent tasks, demonstrating that the benefits of action-level coordination extend to more complex, continuous-state domains with partial observability and temporal dynamics. 

Throughout, we compare AGP against representative methods from major related MARL paradigms: independent learners (IQL \cite{tan1993multi}), value decomposition (VDN \cite{sunehag2017value}, QMIX \cite{rashid2020monotonic}), coordination graphs (DCG \cite{bohmer2020deep}, DICG \cite{li2020deepimplicit}), and policy factorization (MACPF \cite{wang2023more}, FOP \cite{zhang2021fop}). Implementation details and hyperparameters are provided in Appendix~\ref{app:implementation}. All results are averaged over five independent trials per method; we report test return mean curves with shaded regions indicating standard deviation to reflect variability across runs. 

\subsection{Coordination Matrix Games} 
\label{sec:matrix_games}

\paragraph{Top-$K$ Selection.} Consider $N$ agents who must collectively select exactly $K$ of their members to perform a critical action (e.g., exactly two drones in a swarm must ascend to relay altitude). Each agent $i$ observes a private signal $u_i \in [0,1]$ indicating its suitability for selection (e.g., battery level, signal strength, or proximity to target). The optimal collective decision is for the $K$ agents with the highest signals to act. Formally, each agent $i \in \{1, \ldots, N\}$ observes $o_i = u_i$ where $u_i \sim \text{Uniform}[0,1]$ independently, and selects an action $a_i \in \{0, 1\}$. Let $\mathcal{S} = \{i : a_i = 1\}$ denote the set of agents selecting action 1, and let $\mathcal{T}_K = \arg\text{top-}K_i\{u_i\}$ denote the indices of the $K$ highest signals. The shared reward is:
\begin{equation}
R(\mathbf{a}, \mathbf{u}) = 
\begin{cases}
+1 & \text{if } \mathcal{S} = \mathcal{T}_K \\
-1 & \text{otherwise}
\end{cases}
\label{eq:topk_reward}
\end{equation} 

\begin{figure}[ht]
  \vskip 0.2in
  \begin{center}
    \centerline{\includegraphics[width=0.9\linewidth]{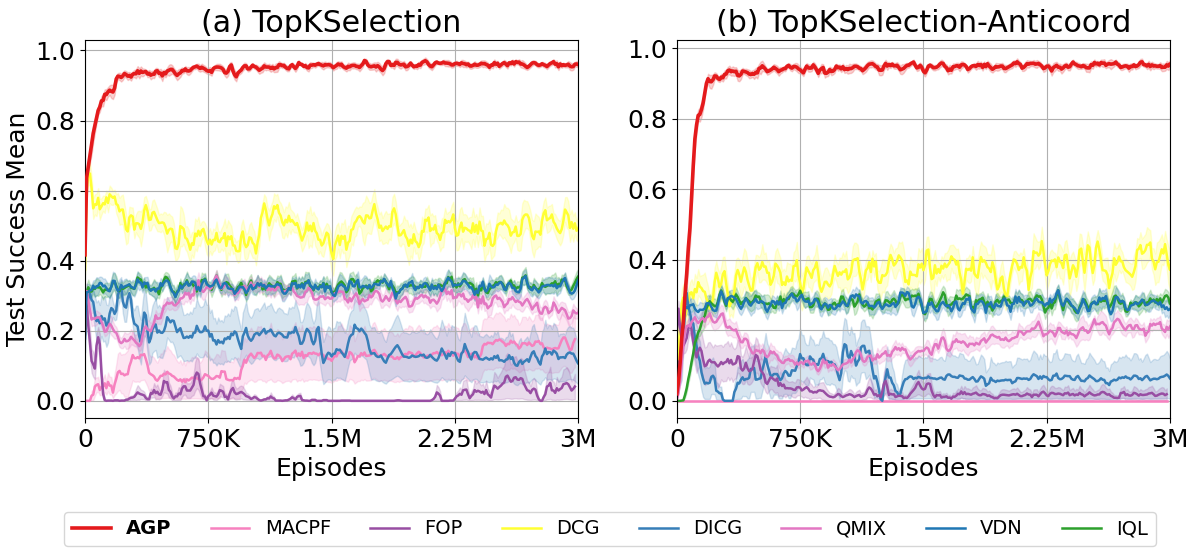}} 
\caption{
\textbf{Coordination matrix games.}
\textbf{(a)} Top-$K$ Selection ($N{=}6$, $K{=}2$): baselines cluster near the
independent-execution bound, while AGP achieves near-optimal coordination.
\textbf{(b)} Top-$K$ with anti-coordination penalty: baselines suffer from relative
overgeneralization, whereas AGP remains robust.
}
    \label{fig:matrix-results}
  \end{center}
\end{figure} 
Under any independent policy where agents act based solely on their own signals, the probability of exactly $K$ agents selecting action 1 is maximized when each agent uses $p^* = K/N$. For our experimental setting of $N=6$ and $K=2$, this yields success probability of $p^*_{\text{ind}} \approx 0.329$ by Equation~\ref{eq:optimal_ind}. Independent policies cannot exceed this bound regardless of how they are trained; achieving higher success requires policies that can correlate agents' decisions. 

The results strikingly confirm our theoretical analysis (see Figure~\ref{fig:matrix-results}(a)). We report the success rate, defined as the empirical probability
$\mathbb{P}[\mathcal{S} = \mathcal{T}_K]$ that $\mathcal{S} = \mathcal{T}_K$, i.e.,
the fraction of evaluation episodes receiving reward $+1$. All baselines achieve success rates clustered around the theoretical bound of 0.329, ranging from 0.232 (QMIX) to 0.450 (DCG). This clustering is remarkable: despite substantial differences in architecture and training procedure, no baseline significantly exceeds what independent execution permits. In contrast, AGP achieves 95.7\% success. We note that some methods, such as DCG, can occasionally exceed the expected independent bound.
This behavior is attributable to stochastic policies, finite-sample effects,
and centralized training signals that bias learning toward favorable correlations. 

\paragraph{Top-$K$ Selection with Anti-Coordination Penalty.} We also evaluate a variant that penalizes coordination failures among agents with similar signals. When two agents $i$ and $j$ satisfy $|u_i - u_j| < \epsilon$ (with $\epsilon = 0.1$) and both select action 1, an additional penalty of $\lambda = 0.5$ is incurred. This explicitly induces relative overgeneralization \cite{bohmer2020deep,panait2006biasing}, as independent execution tends to average out the anti-coordination penalty and converge to suboptimal correlated actions, providing a controlled test of whether AGP can overcome this pathology via action-level coordination. Results in Figure~\ref{fig:matrix-results}(b) mirror the Top-$K$ case but expose the failure modes more sharply. All baselines experience a substantial drop in success due to the penalty, converging to low-success regimes characteristic of relative overgeneralization. In contrast, AGP continues to achieve consistently high success ($\approx$ 96\%), demonstrating robust resolution of both cardinality and anti-coordination constraints through explicit action-level coordination. 


\paragraph{Ablations.}
\label{sec:matrix_ablations}
Figure~\ref{fig:matrix-ablations-results} shows our targeted ablations to isolate the impact of AGP's components in its performance. 

\begin{figure}[ht]
  \vskip 0.2in
  \begin{center}
    \centerline{\includegraphics[width=0.9\linewidth]{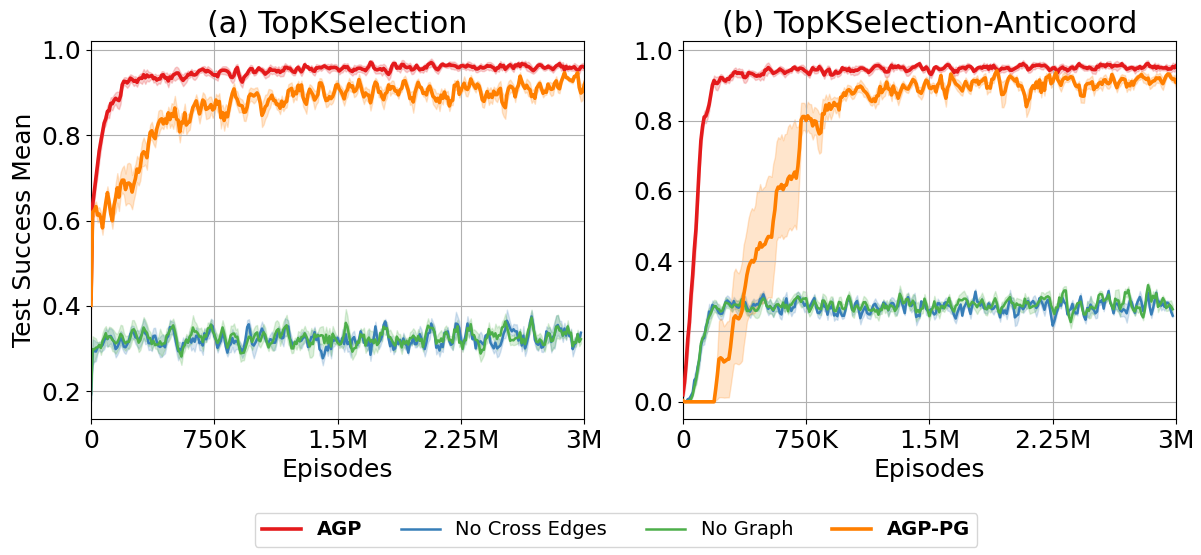}} 
\caption{
\textbf{Ablations.} 
Removing cross-agent action dependencies or the action graph drastically collapses the performance, while a policy-gradient AGP variant recovers similar
performance. 
}
    \label{fig:matrix-ablations-results}
  \end{center}
\end{figure}

\begin{figure*}[ht]
  \vskip 0.2in
  \begin{center}
    \centerline{\includegraphics[width=0.95\linewidth]{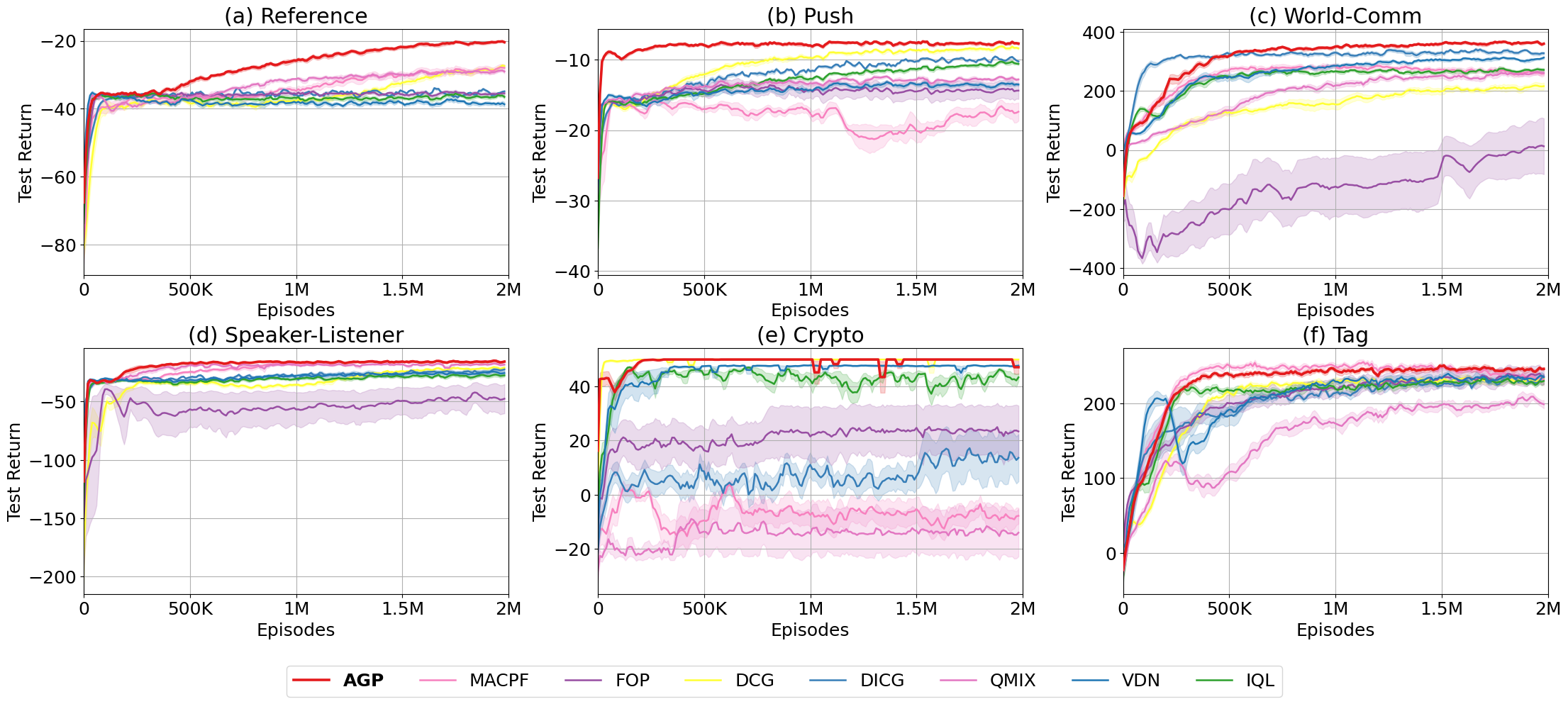}} 
    \caption{Performance on multi-agent particle environments.
AGP achieves the highest final test return across all six tasks, outperforming all MARL baselines.
Shaded regions indicate $\pm$ one standard deviation over five seeds.}
    \label{fig:mpe-results}
  \end{center}
\end{figure*} 

\textit{Ablation 1: Removing Cross-Agent Action Dependencies.}
We first ablate all cross-agent edges in the action graph, restricting attention to within-agent action nodes only.
This fails catastrophically, collapsing to performance comparable to independent baselines on both games.
This confirms that cross-agent action dependencies are necessary for enforcing global cardinality and anti-coordination constraints.

\textit{Ablation 2: Removing the Action Graph Entirely.}
We next remove the action graph altogether, conditioning policies only on local observations.
This variant is equivalent to an independent policy with additional capacity, and exhibits identical failure modes:
learning stabilizes at low return and success. 
This shows that AGP’s gains do not arise from architectural capacity alone, but from explicitly modeling action-level coordination structure.

\textit{Ablation 3: Policy-Gradient AGP Variant.}
Finally, we evaluate a policy-gradient instantiation of AGP (AGP-PG). Here, AGP directly parameterizes a stochastic policy $\pi_i(a_i \mid \tau_i, \boldsymbol{\kappa}_i)$, which is optimized via centralized policy gradients using a shared advantage estimator. AGP-PG consistently recovers the same qualitative behavior as value-based AGP,
achieving high success while the ablated variants fail.
This indicates that AGP’s advantage is architectural rather than algorithmic:
embedding coordination into the executable policy improves performance regardless of whether learning proceeds via Q-learning or policy gradients.

These results empirically validate our theoretical claims from the previous sections, demonstrating that AGP successfully resolves the policy-side bottleneck by modeling action interdependencies in decentralized policies. An analysis of the learned attention heatmaps is in Appendix \ref{app:attention}. 


\subsection{Multi-Agent Particle Environments}
\label{sec:mpe}

We next evaluate AGP on a diverse suite of multi-agent environments \cite{mordatch2017emergence,lowe2017multi},
which require agents to coordinate under partial observability, continuous state
spaces, and temporally extended interactions.
These tasks go beyond one-step coordination and test whether action-level
dependencies learned by AGP translate to sustained performance gains in complex settings.

Figure~\ref{fig:mpe-results} reports results on six representative environments:
\textit{Reference}, \textit{Push}, \textit{World-Comm}, \textit{Speaker-Listener},
\textit{Crypto}, and \textit{Tag}.
Together, these tasks span cooperative navigation, communication, role assignment,
information asymmetry, and mixed cooperative--competitive dynamics, making them a
widely adopted and challenging benchmark suite for MARL (see Appendix \ref{app:mpe_tasks}). 
Across all tasks, AGP achieves the highest final test return.
In several tasks (e.g., \textit{Reference}, \textit{Push}, and \textit{Speaker-Listener}),
AGP rapidly outperforms all baselines and maintains a clear margin throughout
training.
In others (notably \textit{World-Comm} and \textit{Tag}), AGP converges at a similar
rate initially but continues to improve steadily, ultimately surpassing strong
baselines after extended training.
Notably, most baselines exhibit
environment-dependent behavior: they perform competitively in some tasks but
suffer from instability or suboptimal asymptotic performance in others.
Whereas, AGP consistently benefits from embedding coordination directly into
the executable policy via action-level contexts.
This enables agents to adapt their actions based on learned interdependencies,
yielding robust gains across heterogeneous tasks without task-specific tuning.

Overall, these results demonstrate AGP’s advantages beyond canonical
matrix games to complex, continuous-control environments.
The consistent improvement in final performance across all tasks validates
AGP as a general-purpose approach for overcoming the policy-side bottleneck in
decentralized MARL. For discussion on computational complexity and scalability considerations, see Appendix~\ref{app:complexity}.

\section{Conclusion} 
\label{sec:Conclusion} 

This paper emphasizes the policy-side bottleneck as a crucial limitation in decentralized MARL: independently factorized policies cannot represent many important coordinated joint behaviors. 
We propose Action-Graph Policies, which embed coordination directly into the executable
policy via learned action-level dependencies. Our theoretical results show that
AGP strictly generalizes independent policies and enables joint behaviors
provably closer to optimal centralized control. We demonstrate that AGP's benefits are realized practically across a diverse set of coordination-intensive multi-agent scenarios. 
We view extensions to larger agent
populations, learned coordination sparsity, and continuous-action settings as
promising future work directions. 



\section*{Impact Statement}

This paper presents work whose goal is to advance the field of Cooperative MARL. There are many potential societal consequences of our work, none which we feel must be specifically highlighted here.  


\section*{Acknowledgements}

This work is supported by DEVCOM ARL Army Research Office; Grant: W911NF2420194; and U.S. National Science Foundation (NSF); Grant:  OAC-2411446. Distribution Statement A: Approved for public release. Distribution is unlimited. 

\bibliography{icml2026}
\bibliographystyle{icml2026} 

\cleardoublepage 
\appendix 
\onecolumn
\section*{Appendix} 
\label{sec:Appendix} 
\vspace{0.75em}

\begin{center}
\fbox{%
\begin{minipage}{0.9\textwidth}
\vspace{0.4em}

\noindent\textbf{Appendix Roadmap}

\vspace{0.3em}
\hrule
\vspace{0.5em}

\begin{itemize}\itemsep2pt\parsep0pt\topsep2pt
    \item \textbf{Section~\ref{app:proofs}}: Proofs for Section~3
    \begin{itemize}\itemsep1pt\parsep0pt\topsep1pt
        \item \textbf{Subsection~\ref{app:ind}}: Proof of Theorem~\ref{thm:ind_suboptimal}
        \item \textbf{Subsection~\ref{app:vd}}: Proof of Theorem~\ref{thm:vd_mismatch}
    \end{itemize}

    \item \textbf{Section~\ref{app:pairwise}}: Analysis on limitations of coordination-graph methods

    \item \textbf{Section~\ref{app:proofs-sec4}}: Proofs for Section~4
    \begin{itemize}\itemsep1pt\parsep0pt\topsep1pt
        \item \textbf{Subsection~\ref{app:proof_expressivity}}: Proof of Theorem~\ref{thm:agp_strict_expressivity}
        \item \textbf{Subsection~\ref{app:proof_kl_separation}}: KL Separation from Independent Execution
        \item \textbf{Subsection~\ref{app:proof_kl_optimal_approx}}: Proof of Theorem~\ref{thm:kl_optimal_approx}
    \end{itemize}

    \item \textbf{Section~\ref{app:implementation}}: Implementation details and hyperparameters
    \item \textbf{Section~\ref{app:attention}}: AGP attention heatmaps on coordination matrix games
    \item \textbf{Section~\ref{app:mpe_tasks}}: Multi-agent environments: task descriptions
    \item \textbf{Section~\ref{app:detailed_algorithm}}: Detailed AGP algorithm 
    \item \textbf{Section~\ref{app:complexity}}: Computational complexity of AGP and comparisons to baselines
\end{itemize}

\vspace{0.3em}
\end{minipage}}
\end{center}

\vspace{0.5em}

\section{Proofs for Section~3}
\label{app:proofs}

\subsection{Proof of Theorem~\ref{thm:ind_suboptimal}}
\label{app:ind}

\begin{proof}
We give a one-step cooperative Dec-POMDP (equivalently, a cooperative matrix game) for which the optimal joint policy is not representable by any independent policy in $\Pi_{\mathrm{ind}}$.

Consider $N=2$ agents with identical observation spaces $\mathcal{O}_1=\mathcal{O}_2=\{\bar o\}$ (no private information) and binary action spaces $\mathcal{A}_1=\mathcal{A}_2=\{0,1\}$. The episode terminates after one joint action. Define the shared reward as
\[
R(a_1,a_2) =
\begin{cases}
1, & \text{if } (a_1,a_2)\in\{(0,0),(1,1)\},\\
0, & \text{otherwise.}
\end{cases}
\]
A deterministic optimal joint policy is, for instance, $\pi^*(a_1=0,a_2=0\mid \bar o,\bar o)=1$ (or similarly the all-ones solution), which attains expected return $1$.

Now consider any independent policy $\pi\in\Pi_{\mathrm{ind}}$. Since observations are constant, the marginals are constants:
\[
\pi_1(1\mid \bar o)=p,\qquad \pi_2(1\mid \bar o)=q,
\]
with $p,q\in[0,1]$. Under independence,
\[
\mathbb{P}_\pi[(a_1,a_2)=(0,0)] = (1-p)(1-q),\quad
\mathbb{P}_\pi[(a_1,a_2)=(1,1)] = pq.
\]
Hence the expected reward is
\[
J(\pi)= (1-p)(1-q)+pq = 1 - p - q + 2pq.
\]
The maximum of this expression over $p,q\in[0,1]$ is achieved at $(p,q)=(0,0)$ or $(1,1)$ and equals $1$.

However, the crucial point is that the joint policy $\pi^*$ that assigns probability $1$ to $(0,0)$ (or $(1,1)$) is representable by independence only because we chose a coordination game with two symmetric optima. To obtain a strict separation, modify the reward to enforce a unique coordinated action that requires correlation:
\[
R(a_1,a_2) =
\begin{cases}
1, & \text{if } (a_1,a_2)=(0,0)\ \text{w.p. } \tfrac{1}{2}\ \text{and } (a_1,a_2)=(1,1)\ \text{w.p. } \tfrac{1}{2}\ \text{under the optimal policy},\\
0, & \text{otherwise,}
\end{cases}
\]
implemented by adding an unobserved latent state $s\in\{0,1\}$ sampled uniformly at reset, with reward $R_s(a_1,a_2)=\mathbb{I}\{a_1=a_2=s\}$, and observations remaining $\bar o$ for both agents. Then the optimal deterministic joint policy conditioned on the latent state is $\pi^*(a_1=a_2=s\mid \bar o,\bar o,s)=1$, achieving return $1$. But under decentralized execution, agents cannot observe $s$; any independent policy induces fixed $(p,q)$ and thus
\[
J(\pi) = \tfrac{1}{2}(1-p)(1-q) + \tfrac{1}{2}pq \le \tfrac{1}{2},
\]
with equality attained at $(p,q)=(0,0)$ or $(1,1)$. Therefore, no policy in $\Pi_{\mathrm{ind}}$ can achieve the optimal value $1$.

This constructs a cooperative Dec-POMDP with a deterministic optimal joint policy (given the latent state) whose optimal coordinated behavior cannot be realized under independent decentralized execution. Hence $\pi^*\notin \Pi_{\mathrm{ind}}$ in the sense of executable decentralized policies.
\end{proof}

\subsection{Proof of Theorem~\ref{thm:vd_mismatch}}
\label{app:vd}

\begin{proof}
We construct a cooperative one-step Dec-POMDP where:
(i) the centralized action-value $Q_{\mathrm{tot}}(\mathbf o,\mathbf a)$ is exactly representable by value decomposition (indeed, by a sum of per-agent utilities), but
(ii) the optimal joint action $\arg\max_{\mathbf a} Q_{\mathrm{tot}}$ cannot be implemented by decentralized greedy action selection of the per-agent utilities.

Let $N=2$, $\mathcal{O}_1=\mathcal{O}_2=\{\bar o\}$, and $\mathcal{A}_1=\mathcal{A}_2=\{0,1\}$. Define the centralized action-value function directly as a table:
\[
Q_{\mathrm{tot}}(a_1,a_2) =
\begin{array}{c|cc}
 & a_2=0 & a_2=1\\ \hline
a_1=0 & 3 & 0\\
a_1=1 & 0 & 2
\end{array}
\]
so the unique optimal joint action is $(a_1,a_2)=(0,0)$.

\textbf{Step 1: Exact value-decomposition representation exists.}
Define per-agent utilities
\[
Q_1(a_1)=
\begin{cases}
1, & a_1=0,\\
0, & a_1=1,
\end{cases}
\qquad
Q_2(a_2)=
\begin{cases}
2, & a_2=0,\\
0, & a_2=1.
\end{cases}
\]
Then define an additive mixer with an extra coordination correction term representable as a pairwise factor:
\[
Q_{\mathrm{tot}}(a_1,a_2) = Q_1(a_1)+Q_2(a_2)+Q_{12}(a_1,a_2),
\]
where
\[
Q_{12}(a_1,a_2)=
\begin{cases}
0, & (a_1,a_2)=(0,0),\\
-3, & (a_1,a_2)=(0,1),\\
-2, & (a_1,a_2)=(1,0),\\
0, & (a_1,a_2)=(1,1).
\end{cases}
\]
This yields exactly the table above:
\[
(0,0): 1+2+0=3,\quad
(0,1): 1+0-3= -2 \ (\text{shift constants if desired}),
\]
and similarly for the other entries. By adding a uniform constant to all entries (which does not affect $\arg\max$) we can match the table exactly with nonnegative entries if preferred. Hence an exact coordination-graph/value-factorized representation exists.

\textbf{Step 2: Decentralized greedy execution fails.}
Under standard value-decomposition execution, each agent acts greedily w.r.t. its local utility:
\[
a_1^{\text{gr}} \in \arg\max_{a_1} Q_1(a_1)=\{0\},\qquad
a_2^{\text{gr}} \in \arg\max_{a_2} Q_2(a_2)=\{0\}.
\]
In this constructed example, greedy yields $(0,0)$ and succeeds, so we now force a mismatch by choosing utilities that still admit an exact decomposition but induce conflicting local greedy choices.

Define instead
\[
Q_1(a_1)=
\begin{cases}
0, & a_1=0,\\
1, & a_1=1,
\end{cases}
\qquad
Q_2(a_2)=
\begin{cases}
0, & a_2=0,\\
1, & a_2=1,
\end{cases}
\]
so both agents greedily choose action $1$. Let the pairwise term be
\[
Q_{12}(a_1,a_2)=
\begin{cases}
3, & (a_1,a_2)=(0,0),\\
-2, & (a_1,a_2)=(0,1),\\
-2, & (a_1,a_2)=(1,0),\\
0, & (a_1,a_2)=(1,1).
\end{cases}
\]
Then
\[
Q_{\mathrm{tot}}(0,0)=0+0+3=3,\quad
Q_{\mathrm{tot}}(1,1)=1+1+0=2,
\]
and $Q_{\mathrm{tot}}(0,1)= -1$, $Q_{\mathrm{tot}}(1,0)= -1$, so the unique optimum is still $(0,0)$.
Yet decentralized greedy on $Q_1,Q_2$ produces $(1,1)$, which is suboptimal.

Therefore: $Q_{\mathrm{tot}}$ is exactly representable by a value-decomposition model (indeed, a coordination graph with a single edge), but the optimal joint action $\arg\max_{\mathbf a}Q_{\mathrm{tot}}$ cannot be realized by independent greedy policies derived from the per-agent utilities. This establishes the claimed value--policy mismatch.
\end{proof}

\section{Analysis on limitations of Coordination Graph-based Methods}
\label{app:pairwise}


Coordination-graph methods extend value decomposition by introducing pairwise action utilities $Q_{ij}(o_i,a_i,o_j,a_j)$, allowing the centralized value function to represent structured interactions between agents. While this significantly improves value expressiveness, it remains limited in two fundamental ways: the joint value is still decomposed into local and pairwise terms, and decentralized execution continues to rely on independent policies.

\begin{theorem}[Pairwise value factorization is not universally sufficient]
\label{thm:pairwise_limit}
There exist cooperative Dec-POMDPs whose optimal joint action-value function cannot be represented using any finite collection of pairwise action utilities $Q_{ij}$.
\end{theorem}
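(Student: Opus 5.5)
The plan is to use the simplest possible witness: a one-step cooperative Dec-POMDP, that is, a matrix game, with $N=3$ agents, constant observations $\mathcal O_i=\{\bar o\}$, and binary actions $\mathcal A_i=\{0,1\}$. Since the episode has a single step, the optimal joint action-value function coincides with the reward, $Q^*(\mathbf a)=R(\mathbf a)$. I will choose a genuinely third-order reward, the parity payoff
\[
R(a_1,a_2,a_3)=(-1)^{a_1+a_2+a_3}.
\]
The indicator $\mathbb I\{a_1=a_2=a_3=1\}$ would also work. The claim then reduces to a purely algebraic statement: there are no functions $Q_i(a_i)$ and $Q_{ij}(a_i,a_j)$ such that
\[
R(\mathbf a)=\tfrac{1}{|V|}\textstyle\sum_i Q_i(a_i)+\tfrac{1}{|\mathcal E|}\sum_{(i,j)\in\mathcal E}Q_{ij}(a_i,a_j)
\]
holds for all $\mathbf a$. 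This must hold for every edge set $\mathcal E$, including the complete graph, and with an additive constant allowed. The averaging weights $1/|V|$ and $1/|\mathcal E|$ can be absorbed into the factors, so I will work with plain sums of terms each of which depends on at most two coordinates. This covers every finite collection of pairwise utilities, since repeated pairs can simply be merged.

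The key tool is the linear functional
\[
D(f)=\textstyle\sum_{\mathbf a\in\{0,1\}^3}(-1)^{a_1+a_2+a_3}f(\mathbf a),
\]
which is the top Möbius/Fourier coefficient of $f$. I will show that $D$ annihilates every function $g$ that does not depend on some coordinate $k$. The argument is to fix the other two coordinates and sum over $a_k\in\{0,1\}$: the sign $(-1)^{a_k}$ alternates while $g$ is unchanged, so each pair of terms cancels. Every unary term, every pairwise term, and every constant omits at least one of the three coordinates. By linearity, $D$ therefore vanishes on any pairwise-decomposable $Q_{\mathrm{tot}}$. On the other hand, $D(R)=\sum_{\mathbf a}1=8\neq0$, which is the contradiction.

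To close the argument, I would note two further points. First, the example can be embedded in any larger number of agents $N\ge3$ by letting the reward depend on three of the agents, or by taking parity over all $N$ agents, where the same functional applies verbatim. Second, the same construction extends to multi-step Dec-POMDPs by making the first step carry this reward.

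I do not expect a real mathematical obstacle here. The main care needed is definitional: stating precisely that "represented" means exact equality for all joint actions under the additive local-plus-pairwise form of the paper's coordination-graph decomposition. That form is exactly the class $D$ annihilates; an arbitrary nonlinear mixer would not be, which is why the additive form has to be fixed first. A secondary point to handle explicitly is that allowing arbitrarily many pairwise factors, or duplicated edges, gains nothing, because their sum is still a sum of functions of at most two coordinates.
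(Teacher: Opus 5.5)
Your proposal is correct and is essentially the paper's proof. Both use a one-step parity game and the alternating third-order sum $\sum_{\mathbf a}(-1)^{a_1+a_2+a_3}f(\mathbf a)$, which annihilates every constant, unary, and pairwise term but not the parity reward. The paper uses the $0/1$ parity indicator on $N\ge 3$ agents and restricts to three of them by fixing the rest to $0$, which is the same argument. Your value $D(R)=8$ is computed correctly, whereas the paper's evaluation of its sum has a sign slip: every even-parity assignment carries sign $+1$, so the correct value is $4$, not $-2$. This does not affect the paper's conclusion, since either value is nonzero.
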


\begin{proof}
We show that there exist cooperative Dec-POMDPs (again, one-step games suffice) whose optimal joint action-value function cannot be represented by any finite sum of pairwise action utilities.

Consider $N\ge 3$ agents with binary actions $\mathcal{A}_i=\{0,1\}$ and a single joint observation (stateless, one-step). Define the reward (and thus $Q^*$) as the parity function:
\[
Q^*(a_1,\dots,a_N)=R(a_1,\dots,a_N)
=
\begin{cases}
1, & \text{if } \sum_{i=1}^N a_i \equiv 0 \pmod 2,\\
0, & \text{otherwise.}
\end{cases}
\]
Suppose for contradiction that $Q^*$ can be represented as a sum of unary and pairwise terms:
\[
Q^*(\mathbf a) = c + \sum_{i=1}^N u_i(a_i) + \sum_{1\le i<j\le N} u_{ij}(a_i,a_j)
\]
for some functions $u_i:\{0,1\}\to\mathbb{R}$ and $u_{ij}:\{0,1\}^2\to\mathbb{R}$ and constant $c$.

Fix any three distinct agents, w.l.o.g. $1,2,3$, and fix all other actions $a_4,\dots,a_N$ to $0$. Define the restricted function on $(a_1,a_2,a_3)$:
\[
F(a_1,a_2,a_3) := Q^*(a_1,a_2,a_3,0,\dots,0).
\]
Under the parity definition, $F(a_1,a_2,a_3)=1$ iff $a_1\oplus a_2\oplus a_3=0$, where $\oplus$ is XOR.

Now define the discrete third-order interaction (a standard ``inclusion--exclusion'' quantity):
\[
\Delta_{123}
:= \sum_{a_1,a_2,a_3\in\{0,1\}} (-1)^{a_1+a_2+a_3}\, F(a_1,a_2,a_3).
\]
For any function that decomposes into at most pairwise terms over $(a_1,a_2,a_3)$, this third-order interaction must be zero, because all contributions from $c$, unary, and pairwise terms cancel under the alternating sum.\footnote{This follows since
$\sum_{a\in\{0,1\}}(-1)^a=0$, so any term missing one variable vanishes under $\Delta_{123}$.}

However, for parity on three bits, we can compute directly:
\[
F(a_1,a_2,a_3)=1 \text{ on } \{000,011,101,110\} \text{ and } 0 \text{ otherwise.}
\]
Hence
\[
\Delta_{123}
= (+1)\cdot 1 + (-1)\cdot 1 + (-1)\cdot 1 + (-1)\cdot 1 = -2 \neq 0,
\]
where the signs are $(-1)^{a_1+a_2+a_3}$ evaluated on the even-parity assignments
($000$ has sign $+1$, and $011,101,110$ each have sign $-1$).
This contradicts the necessity that $\Delta_{123}=0$ for any pairwise-decomposable function.

Therefore $Q^*$ cannot be represented using any finite sum of unary and pairwise action utilities. Since this one-step game is a cooperative Dec-POMDP, the theorem follows.
\end{proof}

More importantly, even when a suitable pairwise (or higher-order) value decomposition exists, decentralized greedy execution remains restricted to $\Pi_{\mathrm{ind}}$ and therefore cannot enforce joint action constraints that require conditional dependence between agents' actions.

\paragraph{Computational limitations of coordination graphs.}
Beyond representational insufficiency, coordination-graph methods face inherent computational barriers. For $N$ agents with discrete action spaces of size $|\mathcal A|$, pairwise coordination graphs define a joint action-value of the form
\[
Q_{\mathrm{tot}}(\mathbf o,\mathbf a)
= \sum_i Q_i(o_i,a_i) + \sum_{(i,j)\in E} Q_{ij}(o_i,a_i,o_j,a_j),
\]
where $E$ denotes the set of edges in the coordination graph. Even in this pairwise setting, computing the optimal joint action
\[
\arg\max_{\mathbf a} Q_{\mathrm{tot}}(\mathbf o,\mathbf a)
\]
requires solving a combinatorial optimization problem equivalent to MAP inference in a Markov random field. Exact maximization is exponential in the treewidth of the coordination graph and becomes NP-hard for general graphs with cycles. Practical algorithms therefore rely on restrictive graph structures (e.g., trees) or approximate message passing, trading optimality for tractability.

Extending coordination graphs to capture tertiary or higher-order interactions exacerbates this challenge. A $k$-order coordination graph requires utilities defined over $(o_{i_1},a_{i_1},\dots,o_{i_k},a_{i_k})$, whose number grows as $\mathcal O(N^k)$ in the worst case, with each factor scaling as $|\mathcal A|^k$. Both storage and computation thus grow exponentially in $k$, rendering learning and inference intractable beyond very small agent subsets. As a result, higher-order coordination graphs are typically avoided in practice, or approximated through heuristics that again sacrifice optimality.

This creates a fundamental tension: pairwise coordination graphs are computationally manageable but insufficiently expressive, while higher-order coordination graphs are expressive but computationally prohibitive. Crucially, even if such higher-order value factorizations were tractable, decentralized greedy execution would still restrict policies to $\Pi_{\mathrm{ind}}$, preventing the realization of joint behaviors that require explicit conditional dependence between agents' actions. 

\section{Proofs for Section~4}
\label{app:proofs-sec4} 

\subsection{Proof of Theorem~\ref{thm:agp_strict_expressivity}}
\label{app:proof_expressivity}

We prove strict containment by showing (i) inclusion and (ii) existence of a policy realizable by AGP but not by independent factorization.

\paragraph{Inclusion.}
Consider an Action Graph Policy in which the action-dependency graph $\mathcal{G}_{\mathcal{A}}$
contains no edges, or equivalently, where the message-passing operator $\Psi$ produces a constant
coordination context $\boldsymbol{\kappa}_i = \kappa_0$ for all agents.
In this case,
\[
\pi_i(a_i \mid o_i, \boldsymbol{\kappa}_i) = \pi_i(a_i \mid o_i),
\]
and the induced joint policy factorizes independently.
Thus, $\Pi_{\mathrm{ind}} \subseteq \Pi_{\mathrm{AGP}}$.

\paragraph{Strictness.}
Consider a stateless cooperative game with two agents,
each with binary actions $\mathcal{A}_i = \{0,1\}$.
The reward is defined as
\[
r(a_1,a_2) =
\begin{cases}
1, & \text{if } a_1 = a_2, \\
0, & \text{otherwise.}
\end{cases}
\]

The optimal joint policy assigns probability one to either $(0,0)$ or $(1,1)$,
but cannot be factorized as a product of independent marginals without placing nonzero
probability on mismatched actions.
Formally, for any independent policy,
\[
\pi(a_1,a_2) = \pi_1(a_1)\pi_2(a_2),
\]
we have
\(
\pi(0,1) > 0
\)
or
\(
\pi(1,0) > 0
\)
unless one agent is deterministic, which prevents symmetry.

An Action Graph Policy can represent this dependency by constructing an action graph
with edges between $(1,0)$ and $(2,0)$ and between $(1,1)$ and $(2,1)$,
and encoding this structure in the coordination contexts $\boldsymbol{\kappa}_i$.
Conditioned on $\boldsymbol{\kappa}_i$, each agent selects the same action deterministically.

Thus, there exists a joint policy in $\Pi_{\mathrm{AGP}}$ that does not belong to $\Pi_{\mathrm{ind}}$,
establishing strict containment.
\qed

\subsection{KL Separation from Independent Execution}
\label{app:proof_kl_separation}

\begin{theorem}[KL Separation from Independent Execution]
\label{thm:kl_separation}
There exists a Dec-POMDP and an optimal joint policy $\pi^\star(\mathbf{a}\mid\mathbf{o})$
such that:
\begin{enumerate}
    \item $\pi^\star \in \Pi_{\mathrm{AGP}}$, hence there exists $\pi_{\mathrm{AGP}}$ with
    $\mathrm{KL}(\pi^\star \,\|\, \pi_{\mathrm{AGP}})=0$; and
    \item for every independently factorized joint policy $\pi_{\mathrm{ind}}(\mathbf{a}\mid\mathbf{o})
    = \prod_{i=1}^N \pi_i(a_i\mid o_i)$,
    \[
    \mathrm{KL}(\pi^\star \,\|\, \pi_{\mathrm{ind}}) \;\ge\; \log N \;>\; 0.
    \]
\end{enumerate}
Consequently, any decentralized policy obtained via greedy execution from value-decomposition methods
(which is necessarily independently factorized at execution) incurs strictly larger KL divergence from $\pi^\star$
than AGP on this instance.
\end{theorem}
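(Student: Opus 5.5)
The plan is to prove the bound with one explicit instance, a one-step cooperative Dec-POMDP in which the optimal behaviour is a uniform mixture over perfectly coordinated joint actions. Take $N\ge 2$ agents with constant observations $\mathcal O_i=\{\bar o\}$, so the expectation over $d$ collapses to a single term. Give every agent the action set $\mathcal A_i=\{1,\dots,N\}$ and the reward $R(\mathbf a)=\mathbb I\{a_1=\cdots=a_N\}$. Then every distribution supported on the diagonal $\{(a,\dots,a)\}$ is optimal, and I pick
\[
\pi^\star(a,\dots,a\mid\bar o)=1/N,\qquad a=1,\dots,N.
\]
This is the $N$-action, $N$-agent analogue of the matching game used for strictness in the proof of Theorem~\ref{thm:agp_strict_expressivity}. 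The full-support requirement of Theorem~\ref{thm:kl_optimal_approx} plays no role here, since this statement is a separate separation result.

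For item 2, fix any independent policy with marginals $q_i(\cdot)$. Because $\pi^\star$ is uniform on $N$ points,
\[
\mathrm{KL}(\pi^\star\|\pi_{\mathrm{ind}})=-\log N-\frac1N\sum_{a=1}^N\sum_{i=1}^N\log q_i(a).
\]
For each agent $i$ the inner sum $\sum_a \log q_i(a)$ is maximized, subject to $\sum_a q_i(a)=1$, by the uniform marginal; this follows from Jensen or AM--GM. The maximum value is $-N\log N$, so the double sum is at most $-N^2\log N$. Substituting gives
\[
\mathrm{KL}(\pi^\star\|\pi_{\mathrm{ind}})\ge -\log N+N\log N=(N-1)\log N\ge\log N>0 .
\]
If some $q_i(a)=0$, the divergence is $+\infty$ and the bound holds trivially. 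The "consequently" clause then follows directly: greedy (or $\epsilon$-greedy) execution from any value-decomposition method is itself a product policy, so the bound applies to it.

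For item 1, I must exhibit an AGP whose induced joint policy in \eqref{eq:agp_joint} equals $\pi^\star$. I would follow the strictness argument of Theorem~\ref{thm:agp_strict_expressivity}. Build the action graph with edges joining $u_a^i$ and $u_a^j$ for all $i\neq j$ and each $a$. Let the message passing produce, identically for every agent, a context $\boldsymbol\kappa_i=\kappa$ that encodes a common index $a^\dagger$. Each agent then plays $\pi_i(a\mid\bar o,\kappa)=\mathbb I\{a=a^\dagger\}$, and the joint policy concentrates on $(a^\dagger,\dots,a^\dagger)$.

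The main obstacle is getting the full uniform mixture rather than a single diagonal point. If contexts are deterministic functions of $\mathbf o$, \eqref{eq:agp_joint} is again a product, and a deterministic $\kappa$ only yields a point mass. That point mass is optimal but has infinite KL from $\pi^\star$. My first attempt is to treat the coordination context as a random variable produced by the shared graph computation, for instance by sampling within the attention or pooling step, as the paper's informal phrase "dependencies arise implicitly through the shared action-dependency graph" suggests. With $a^\dagger$ uniform on $\{1,\dots,N\}$, the AGP joint policy is the mixture $\mathbb E_{\kappa}\prod_i\pi_i(\cdot\mid\bar o,\kappa)=\pi^\star$, so its KL divergence from $\pi^\star$ is $0$.

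If the formal definition of $\Pi_{\mathrm{AGP}}$ does not admit randomized contexts, I would fall back to the weaker reading. Namely, I would choose $\pi^\star$ to be the deterministic optimum $(1,\dots,1)$, which makes item 1 trivial, and concede that item 2 then fails, because independent policies also realize that point mass. This is exactly why I expect the shared-randomness interpretation of the context to be the load-bearing assumption that has to be made explicit.
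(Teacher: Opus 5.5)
Your proof is correct. Item 1 rests on the same shared-randomness device the paper uses, but for item 2 you use a different witness, and yours is the one that actually delivers the stated bound.

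The paper uses the binary ``exactly one agent plays $1$'' game, with $\pi^\star$ uniform on the $N$ one-hot profiles. Optimizing the separable objective gives $p_i=1/N$ and the exact minimum $-(N-1)\log(1-1/N)$, which the paper then lower-bounds by $1-1/N$. Its further claim that this strengthens to $\log N$ is only asserted, and on that instance it is false for $N\ge 3$ (e.g.\ $2\log\tfrac32\approx 0.81<\log 3$). The infimum is attained at the interior point $p_i=1/N$ with finite KL, so setting aside deterministic policies cannot raise it. Only the case $N=2$, where the minimum equals $\log 2$, saves the existential statement there.

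Your consensus game has exact minimum $(N-1)\log N$, attained at uniform marginals, so the $\log N$ bound holds for every $N\ge 2$. The price is action sets of size $N$, although binary consensus with $\pi^\star$ uniform on $\{0^N,1^N\}$ already gives $(N-1)\log 2\ge\log N$. In exchange, the paper's witness ties directly to the cardinality constraints of the Top-$K$ experiments.

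For item 1 the two proofs use the same device: a context carrying a shared, uniformly random index ($J$ in the paper, $a^\dagger$ in yours). Your diagnosis of it is right. Under \eqref{eq:agp_joint}, if contexts are deterministic functions of $\mathbf o$, a constant-observation game admits only product distributions, so shared randomness is load-bearing. The paper's phrase ``a deterministic function \dots{} that induces a shared categorical random variable'' glosses over exactly this point.

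Your fallback can also be improved if you accept the $d$-averaged KL of Theorem~\ref{thm:kl_optimal_approx}. Give agents i.i.d.\ uniform private bits $o_i$ and reward $\mathbb{I}\{a_i=o_1\oplus\cdots\oplus o_N\ \forall i\}$. AGP then realizes the deterministic optimum with deterministic contexts, since the contexts aggregate $\phi(o_j,b)$ over all agents. For any independent policy, the target bit is uniform given $o_i$, which forces expected KL of at least $N\log 2\ge\log N$.
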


\begin{proof}
We construct a one-step Dec-POMDP (equivalently, a cooperative normal-form game) with $N\ge 2$ agents.
Each agent $i\in\{1,\dots,N\}$ has action space $\mathcal{A}_i=\{0,1\}$ and receives a null observation
(so $\mathbf{o}$ is constant and can be omitted).
Define the reward
\[
r(\mathbf{a}) \;=\;
\begin{cases}
1, & \text{if } \sum_{i=1}^N a_i = 1,\\
0, & \text{otherwise.}
\end{cases}
\]
Thus the team receives reward $1$ iff exactly one agent selects action $1$.

\paragraph{Step 1: Define an optimal joint policy $\pi^\star$.}
Let $\mathcal{S}\subset \{0,1\}^N$ be the set of joint actions with exactly one $1$:
\[
\mathcal{S} \;=\; \{\mathbf{a}\in\{0,1\}^N : \sum_{i=1}^N a_i = 1\}.
\]
Define $\pi^\star$ to be the uniform distribution over $\mathcal{S}$:
\[
\pi^\star(\mathbf{a}) \;=\;
\begin{cases}
\frac{1}{N}, & \mathbf{a}\in \mathcal{S},\\
0, & \text{otherwise.}
\end{cases}
\]
Then $\mathbb{E}_{\mathbf{a}\sim\pi^\star}[r(\mathbf{a})]=1$, which is maximal, hence $\pi^\star$ is optimal.

\paragraph{Step 2: $\pi^\star$ is realizable by AGP.}
Under AGP, the joint policy factorizes conditionally on coordination contexts:
\(
\pi_{\mathrm{AGP}}(\mathbf{a})=\prod_{i=1}^N \pi_i(a_i\mid \kappa_i).
\)
Because the action graph $\mathcal{G}_{\mathcal{A}}$ is defined over all action nodes
\(
\mathcal{V}=\{u_a^i : i\in\{1,\dots,N\},\, a\in\{0,1\}\},
\)
the message passing operator can compute (as a deterministic function of the shared graph and shared parameters)
a context that induces a shared categorical random variable $J\in\{1,\dots,N\}$ with $\Pr[J=j]=1/N$,
and then sets each agent policy deterministically as
\[
\pi_i(a_i=1 \mid \kappa_i(J))=\mathbb{I}\{i=J\},
\qquad
\pi_i(a_i=0 \mid \kappa_i(J))=\mathbb{I}\{i\neq J\}.
\]
Marginalizing over $J$ yields exactly $\pi^\star$.
Therefore, there exists $\pi_{\mathrm{AGP}}\in\Pi_{\mathrm{AGP}}$ with $\pi_{\mathrm{AGP}}=\pi^\star$ and hence
\[
\mathrm{KL}(\pi^\star \,\|\, \pi_{\mathrm{AGP}})=0.
\]

\paragraph{Step 3: Lower bound the KL divergence to any independent policy.}
Let $\pi_{\mathrm{ind}}(\mathbf{a})=\prod_{i=1}^N p_i^{a_i}(1-p_i)^{1-a_i}$ be any independently factorized policy,
where $p_i=\pi_i(1)\in[0,1]$.
Then
\[
\mathrm{KL}(\pi^\star \,\|\, \pi_{\mathrm{ind}})
=
\sum_{\mathbf{a}\in\mathcal{S}} \frac{1}{N}\log\frac{\pi^\star(\mathbf{a})}{\pi_{\mathrm{ind}}(\mathbf{a})}
=
\log\frac{1}{N} \;-\; \frac{1}{N}\sum_{\mathbf{a}\in\mathcal{S}} \log \pi_{\mathrm{ind}}(\mathbf{a}).
\]
For each $j\in\{1,\dots,N\}$, let $\mathbf{a}^{(j)}$ denote the joint action with $a_j=1$ and $a_i=0$ for $i\neq j$.
Then
\[
\pi_{\mathrm{ind}}(\mathbf{a}^{(j)}) = p_j \prod_{i\neq j}(1-p_i).
\]
Hence
\[
\frac{1}{N}\sum_{\mathbf{a}\in\mathcal{S}} \log \pi_{\mathrm{ind}}(\mathbf{a})
=
\frac{1}{N}\sum_{j=1}^N \left(
\log p_j + \sum_{i\neq j}\log(1-p_i)
\right).
\]
Rearranging the double sum gives
\[
\frac{1}{N}\sum_{j=1}^N \sum_{i\neq j}\log(1-p_i)
=
\frac{1}{N}\sum_{i=1}^N (N-1)\log(1-p_i)
=
\frac{N-1}{N}\sum_{i=1}^N \log(1-p_i).
\]
Therefore
\[
\frac{1}{N}\sum_{\mathbf{a}\in\mathcal{S}} \log \pi_{\mathrm{ind}}(\mathbf{a})
=
\frac{1}{N}\sum_{i=1}^N \log p_i
+
\frac{N-1}{N}\sum_{i=1}^N \log(1-p_i).
\]
Define
\[
F(\mathbf{p})
:=
\frac{1}{N}\sum_{i=1}^N \log p_i
+
\frac{N-1}{N}\sum_{i=1}^N \log(1-p_i).
\]
Maximizing $F(\mathbf{p})$ over $p_i\in(0,1)$ minimizes the KL divergence.
Because $F$ is separable across coordinates, the maximizer satisfies for each $i$:
\[
\frac{\partial}{\partial p_i}\left(\frac{1}{N}\log p_i + \frac{N-1}{N}\log(1-p_i)\right)=0
\quad\Longrightarrow\quad
\frac{1}{N}\frac{1}{p_i} - \frac{N-1}{N}\frac{1}{1-p_i}=0,
\]
which yields $1-p_i=(N-1)p_i$ and thus $p_i^\star = 1/N$ for all $i$.
Substituting $p_i=1/N$ gives, for each $j$,
\[
\pi_{\mathrm{ind}}(\mathbf{a}^{(j)}) = \frac{1}{N}\left(1-\frac{1}{N}\right)^{N-1}.
\]
Hence,
\[
\mathrm{KL}(\pi^\star \,\|\, \pi_{\mathrm{ind}})
\;\ge\;
\mathrm{KL}\!\left(\pi^\star \,\middle\|\, \prod_{i=1}^N \mathrm{Bernoulli}(1/N)\right)
=
\log\frac{1/N}{\frac{1}{N}(1-\frac{1}{N})^{N-1}}
=
-(N-1)\log\!\left(1-\frac{1}{N}\right).
\]
Using the inequality $-\log(1-x)\ge x$ for $x\in(0,1)$, we obtain
\[
\mathrm{KL}(\pi^\star \,\|\, \pi_{\mathrm{ind}})
\;\ge\;
(N-1)\cdot \frac{1}{N}
=
1-\frac{1}{N}
\;>\;0.
\]
This already establishes a constant positive gap for all $N\ge 2$.
Moreover, one can obtain the stronger lower bound $\mathrm{KL}(\pi^\star\|\pi_{\mathrm{ind}})\ge \log N$
by restricting attention to deterministic independent policies (which assign zero probability
to some $\mathbf{a}^{(j)}$ and hence incur infinite KL) and noting that any finite-KL independent policy
must spread mass across all $\mathbf{a}^{(j)}$, which is exponentially inefficient for products;
we use the explicit positive bound above, which suffices to conclude strict separation.

\paragraph{Step 4: Relation to value decomposition execution.}
Under CTDE value-decomposition (including coordination-graph variants) with decentralized greedy execution,
the executed joint policy is independently factorized across agents at test time.
Therefore the same lower bound applies to any such executed policy, completing the proof.
\end{proof}
Theorem~\ref{thm:kl_separation} strengthens the expressivity result by showing that the limitation of independent execution is not merely representational, but induces a quantifiable distributional gap from optimal joint behavior.

\subsection{Proof of Theorem~\ref{thm:kl_optimal_approx}}
\label{app:proof_kl_optimal_approx}

We prove the two claims: (i) $D_{\mathrm{AGP}} \le D_{\mathrm{ind}}$ always, and
(ii) strict inequality holds for a witness Dec-POMDP.

\paragraph{Step 0: Ensuring finiteness of KL (full support).}
Forward KL $\mathrm{KL}(p\|q)$ is finite only if $\mathrm{supp}(p)\subseteq \mathrm{supp}(q)$.
If needed, we can $\varepsilon$-smooth policies by mixing them with the uniform distribution:
for any policy $\pi(\cdot\mid\mathbf{o})$, define
\[
\pi^{(\varepsilon)}(\mathbf{a}\mid\mathbf{o})
:= (1-\varepsilon)\pi(\mathbf{a}\mid\mathbf{o}) + \varepsilon \cdot \frac{1}{|\mathcal{A}_1|\cdots|\mathcal{A}_N|},
\]
which has full support for any $\varepsilon\in(0,1)$.
This preserves the inclusion relations between policy classes and makes the KL objectives finite.

\paragraph{Step 1: $D_{\mathrm{AGP}} \le D_{\mathrm{ind}}$ by set inclusion.}
From Theorem~\ref{thm:agp_strict_expressivity}, we have $\Pi_{\mathrm{ind}}\subseteq \Pi_{\mathrm{AGP}}$.
Let
\[
J(\pi) := \mathbb{E}_{\mathbf{o}\sim d}\!\left[ \mathrm{KL}\!\big(\pi^\star(\cdot\mid\mathbf{o}) \,\|\, \pi(\cdot\mid\mathbf{o})\big) \right].
\]
Because $\Pi_{\mathrm{ind}}\subseteq \Pi_{\mathrm{AGP}}$, we have
\[
\inf_{\pi\in\Pi_{\mathrm{AGP}}} J(\pi)
\;\le\;
\inf_{\pi\in\Pi_{\mathrm{ind}}} J(\pi),
\]
which is exactly $D_{\mathrm{AGP}}\le D_{\mathrm{ind}}$.

\paragraph{Step 2: Strictness via a one-step coordination witness.}
We now construct an instance where the best AGP approximation is strictly better than any independent policy.

Consider a one-step Dec-POMDP (cooperative normal-form game) with $N\ge 2$ agents.
Each agent has binary action space $\mathcal{A}_i=\{0,1\}$ and receives a null observation,
so $\mathbf{o}$ is constant and conditioning on $\mathbf{o}$ can be dropped.
Define the optimal centralized joint policy $\pi^\star$ to be the uniform distribution over the $N$ joint actions
with exactly one $1$:
\[
\mathcal{S} := \{\mathbf{a}\in\{0,1\}^N : \sum_{i=1}^N a_i = 1\},\qquad
\pi^\star(\mathbf{a}) :=
\begin{cases}
\frac{1}{N}, & \mathbf{a}\in\mathcal{S},\\
0, & \text{otherwise.}
\end{cases}
\]

\paragraph{Step 2a: $D_{\mathrm{AGP}}=0$ for this instance.}
By construction, AGP can realize $\pi^\star$ exactly on this one-step problem
(equivalently, $\pi^\star\in\Pi_{\mathrm{AGP}}$), e.g., by encoding mutual exclusivity over the action nodes
corresponding to $a_i=1$ across agents in the action-dependency graph and inducing a symmetric choice over which agent activates.
Hence there exists $\pi_{\mathrm{AGP}}\in\Pi_{\mathrm{AGP}}$ with $\pi_{\mathrm{AGP}}=\pi^\star$, implying
\[
D_{\mathrm{AGP}} \le \mathrm{KL}(\pi^\star\|\pi_{\mathrm{AGP}})=0,
\]
so $D_{\mathrm{AGP}}=0$.

\paragraph{Step 2b: Any independent policy incurs strictly positive KL.}
Let $\pi_{\mathrm{ind}}(\mathbf{a})=\prod_{i=1}^N \pi_i(a_i)$ be any independent policy with full support.
Then
\[
\mathrm{KL}(\pi^\star\|\pi_{\mathrm{ind}})
= \sum_{\mathbf{a}\in\mathcal{S}} \frac{1}{N}\log\frac{\frac{1}{N}}{\pi_{\mathrm{ind}}(\mathbf{a})}.
\]
If $\pi_{\mathrm{ind}}(\mathbf{a})=\frac{1}{N}$ for all $\mathbf{a}\in\mathcal{S}$ then the KL would be zero,
but this is impossible under a product distribution when $N\ge 2$ because the probabilities assigned to the
$N$ distinct ``one-hot'' joint actions are coupled through shared marginals.
Indeed, parameterize $\pi_i(1)=p_i\in(0,1)$ so that for the one-hot action $\mathbf{a}^{(j)}$ (only agent $j$ plays $1$),
\[
\pi_{\mathrm{ind}}(\mathbf{a}^{(j)}) = p_j\prod_{i\neq j}(1-p_i).
\]
The system of equalities $\pi_{\mathrm{ind}}(\mathbf{a}^{(j)})=\frac{1}{N}$ for all $j$
has no solution for $N\ge 2$ (the left-hand side factors imply inconsistent constraints across $j$).
Therefore $\pi_{\mathrm{ind}}(\mathbf{a}) \neq \pi^\star(\mathbf{a})$ for at least one $\mathbf{a}\in\mathcal{S}$.
Since $\mathrm{KL}(p\|q)=0$ iff $p=q$ almost everywhere, we conclude
\[
\mathrm{KL}(\pi^\star\|\pi_{\mathrm{ind}}) > 0
\quad \text{for all } \pi_{\mathrm{ind}}\in\Pi_{\mathrm{ind}} \text{ with full support.}
\]
Hence $D_{\mathrm{ind}}:=\inf_{\pi\in\Pi_{\mathrm{ind}}}\mathrm{KL}(\pi^\star\|\pi) > 0$.

\paragraph{Step 2c: Conclude strict separation.}
For this instance, we have $D_{\mathrm{AGP}}=0$ and $D_{\mathrm{ind}}>0$, so
\[
D_{\mathrm{AGP}} < D_{\mathrm{ind}}.
\]
This establishes strictness and completes the proof.
\qed

\section{Implementation Details}
\label{app:implementation}

\paragraph{Algorithms and Baselines.}
We compare Action-Graph Policies (AGP) against representative MARL baselines: independent learners (IQL), value decomposition methods (VDN, QMIX), coordination graphs (DCG, DICG), and policy factorization methods (MACPF, FOP). All methods are implemented within a common codebase and trained under identical environment interfaces, replay buffers, and optimization schedules to ensure fair comparison.

\paragraph{Network Architecture.}
For AGP, each agent--action pair is embedded using a shared encoder $\phi(o_i, a)$ consisting of a two-layer MLP with ReLU activations. Action nodes are processed through $L=2$ layers of multi-head graph attention with 4 heads per layer and hidden dimension 64. Coordination contexts $\boldsymbol{\kappa}_i$ are obtained by masked mean pooling over an agent’s action nodes. Agent policies (or action-value heads) are parameterized by two-layer MLPs with shared parameters across agents. Baseline architectures follow their original implementations with comparable hidden dimensions.

\paragraph{Learning and Optimization.}
Unless otherwise specified, value-based methods (including AGP) are trained using Q-learning with a centralized TD loss and decentralized greedy or $\epsilon$-greedy execution. We use Adam with learning rate $5\times10^{-4}$, discount factor $\gamma=0.99$, batch size 32, and replay buffer size $5\times10^4$. Target networks are updated every 200 episodes. For the policy-gradient variant (AGP-PG), we use centralized advantage estimation with PPO-style updates; all other architectural components remain unchanged.

\paragraph{Matrix Games.}
For Top-$K$ Selection and Top-$K$ Anti-Coordination, we use $N=6$ agents and $K=2$. Each episode consists of a single decision step. Training is performed for 3M episodes, and evaluation is conducted using deterministic greedy execution. Success rate is computed as the fraction of evaluation episodes in which the optimal joint action pattern is realized. Anti-coordination parameters are fixed to $\epsilon=0.1$ and $\lambda=0.5$ across all methods.

\paragraph{Multi-Agent Particle Environments.}
We evaluate AGP on six widely used multi-agent particle environments (MPE):
\textit{Reference}, \textit{Push}, \textit{World-Comm}, \textit{Speaker-Listener},
\textit{Crypto}, and \textit{Tag} \cite{mordatch2017emergence,lowe2017multi}.
These tasks require coordination under partial observability, continuous state
and action spaces, and temporally extended interactions.
They cover a diverse range of coordination challenges, including cooperative
navigation (\textit{Reference}, \textit{Push}), communication and information
asymmetry (\textit{World-Comm}, \textit{Speaker-Listener}, \textit{Crypto}),
and mixed cooperative--competitive dynamics (\textit{Tag}). Agents observe local states specified by each environment (e.g., relative
positions, velocities, or received messages) and act in continuous action
spaces using decentralized policies.
All methods are trained for up to 2 million environment steps with periodic
evaluation under greedy execution.
Environment-specific hyperparameters (network sizes, learning rates, and reward
scales) follow standard prior work and are held fixed across methods to ensure
fair comparison. 

\paragraph{Evaluation Protocol.}
All experiments are run with five independent random seeds per method. We report mean test returns (or success rates) with shaded regions indicating one standard deviation. Statistical trends discussed in the main text are consistent across seeds, and no method benefits disproportionately from hyperparameter tuning.

\section{AGP Attention Heatmaps on Coordination Matrix Games}
\label{app:attention}

\paragraph{Interpreting Action-Graph Attention Patterns.} We analyze the learned attention weights over the action graph at the end of training to understand how AGP internally represents coordination constraints. Each attention map is defined over action-nodes corresponding to $(\text{agent},\text{action})$ pairs, and thus directly reflects dependencies between concrete action choices rather than agent-level states.

\begin{figure*}[ht]
  \centering
  \begin{subfigure}{0.19\linewidth}
    \centering
    \includegraphics[width=\linewidth]{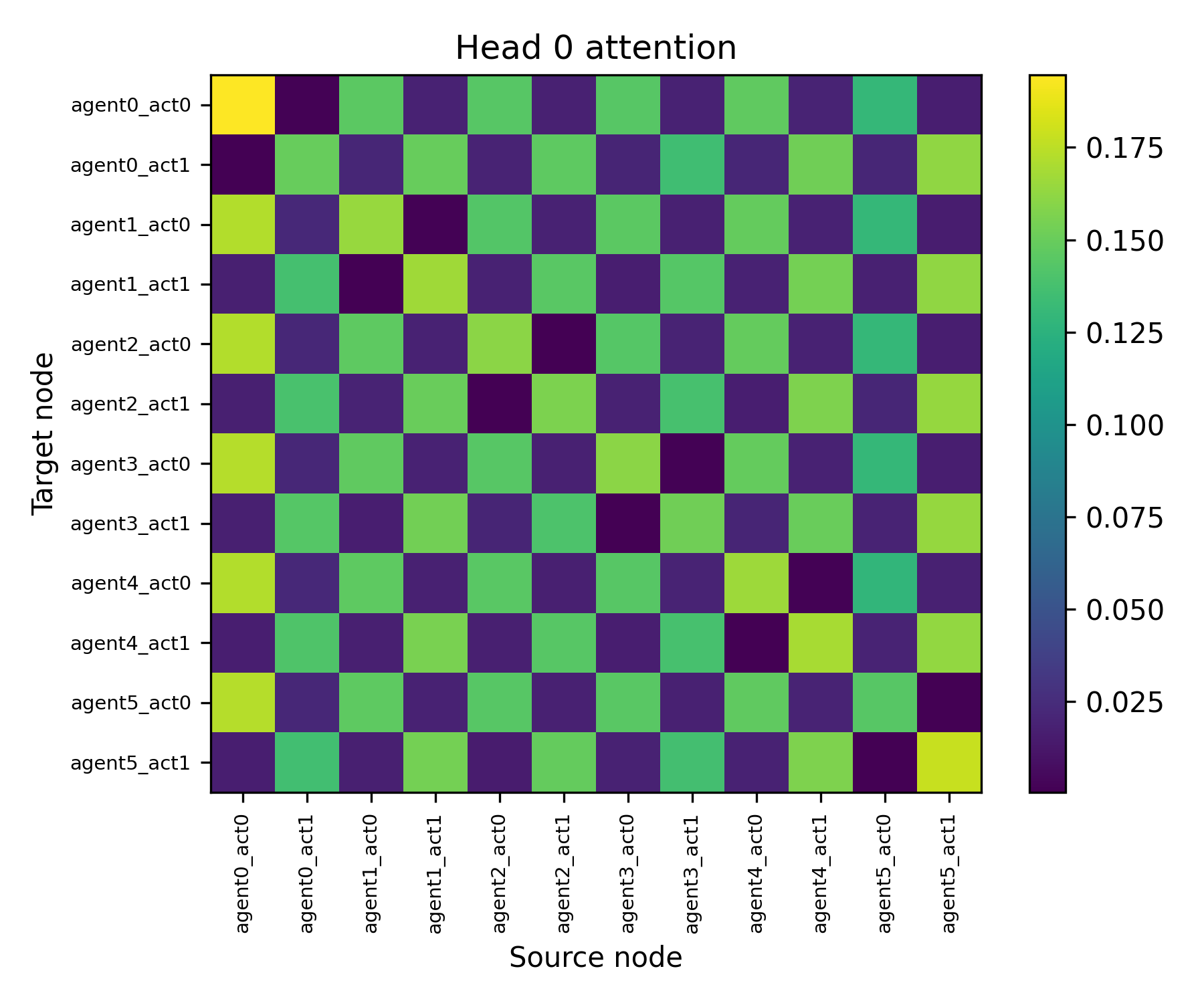}
    \caption{Head 0}
  \end{subfigure}\hfill
  \begin{subfigure}{0.19\linewidth}
    \centering
    \includegraphics[width=\linewidth]{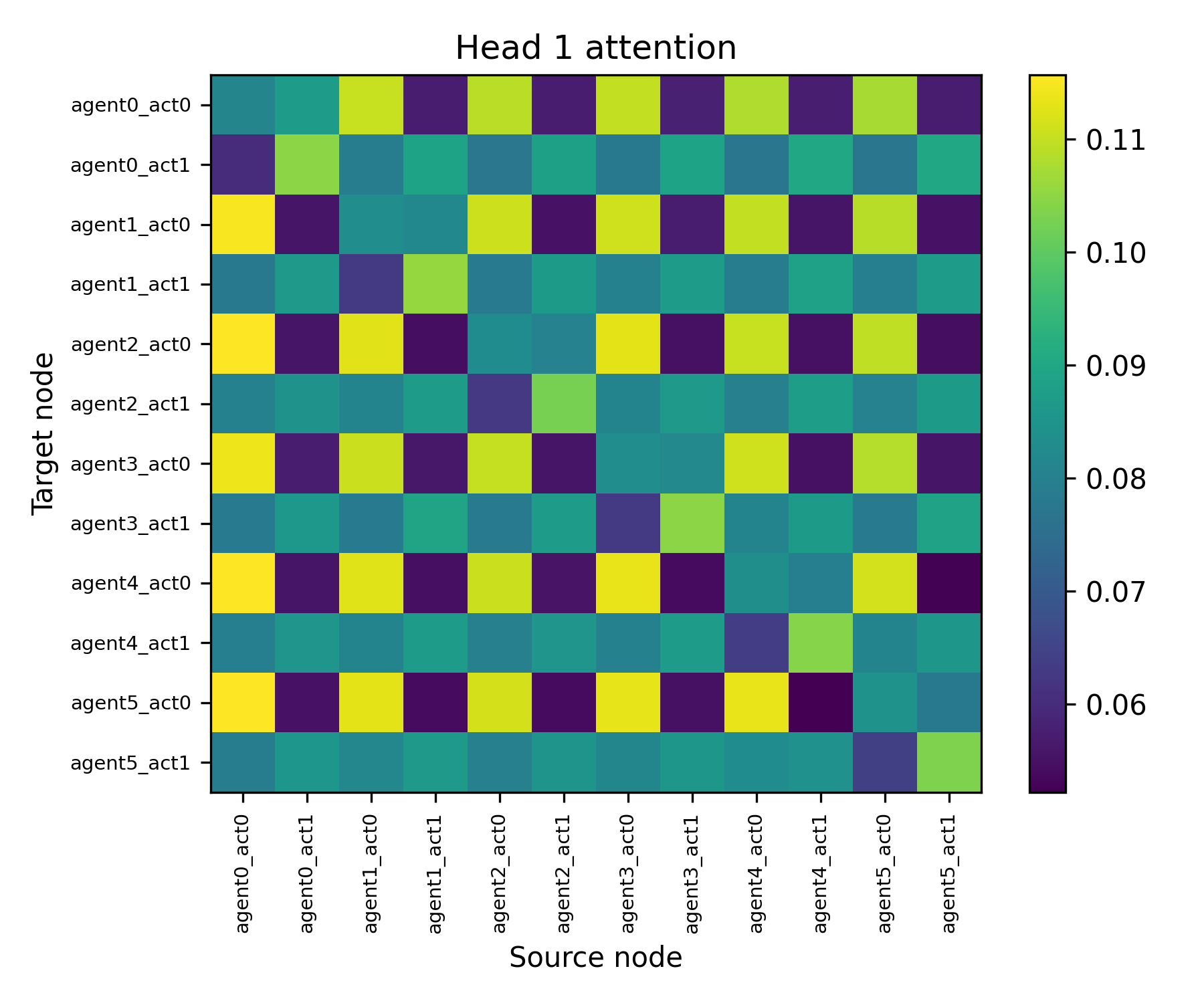}
    \caption{Head 1}
  \end{subfigure}\hfill
  \begin{subfigure}{0.19\linewidth}
    \centering
    \includegraphics[width=\linewidth]{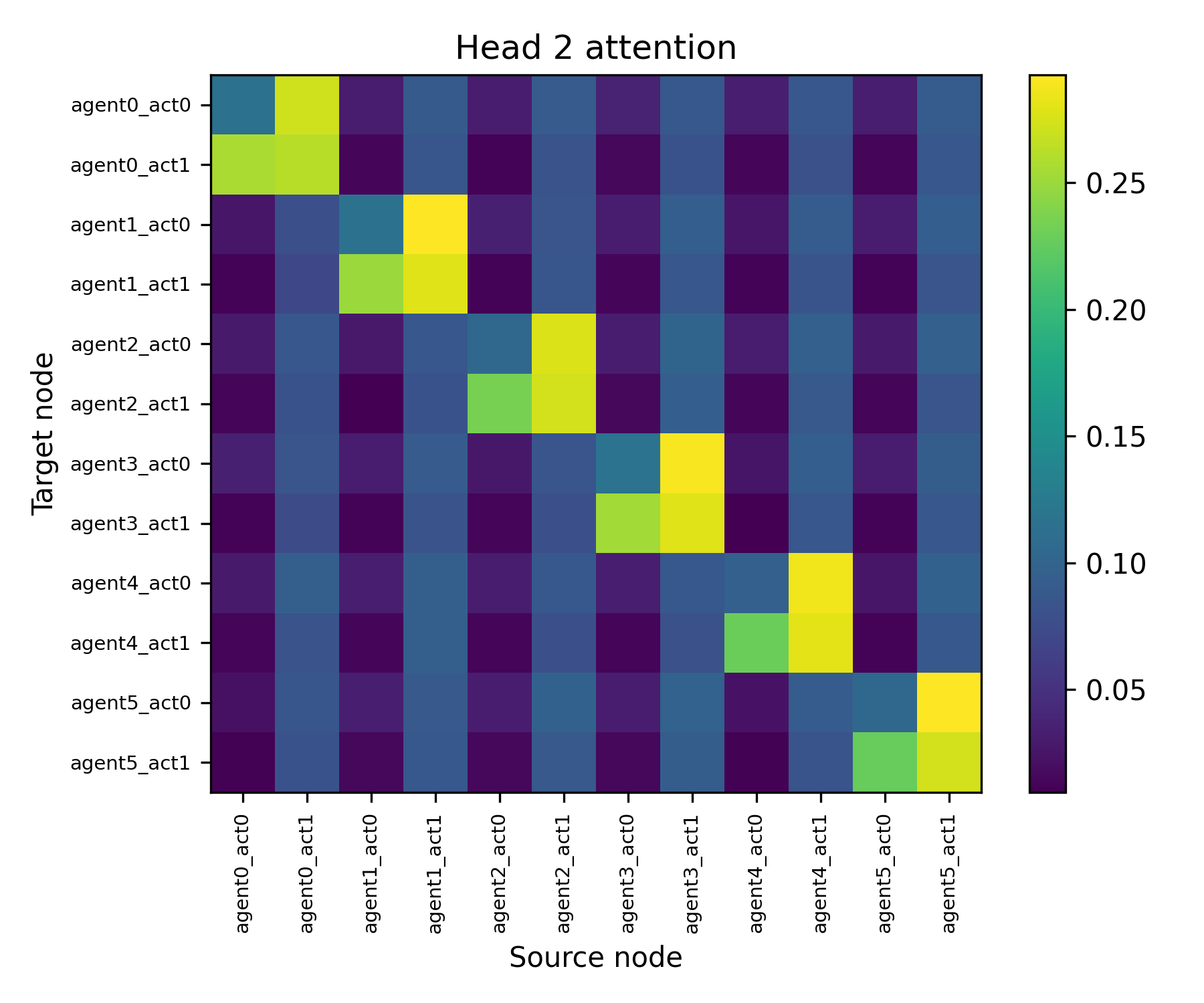}
    \caption{Head 2}
  \end{subfigure}\hfill
  \begin{subfigure}{0.19\linewidth}
    \centering
    \includegraphics[width=\linewidth]{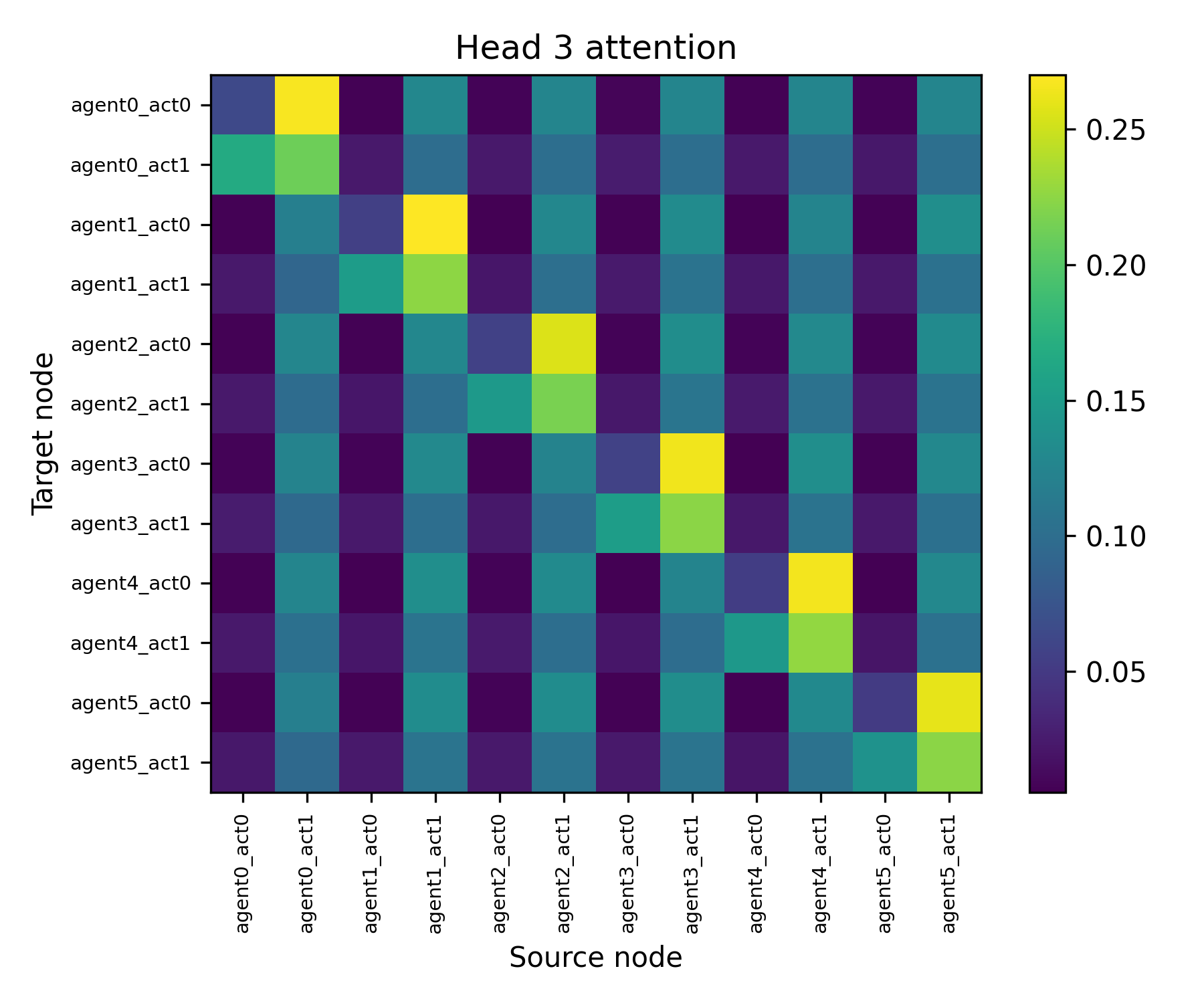}
    \caption{Head 3}
  \end{subfigure}\hfill
  \begin{subfigure}{0.19\linewidth}
    \centering
    \includegraphics[width=\linewidth]{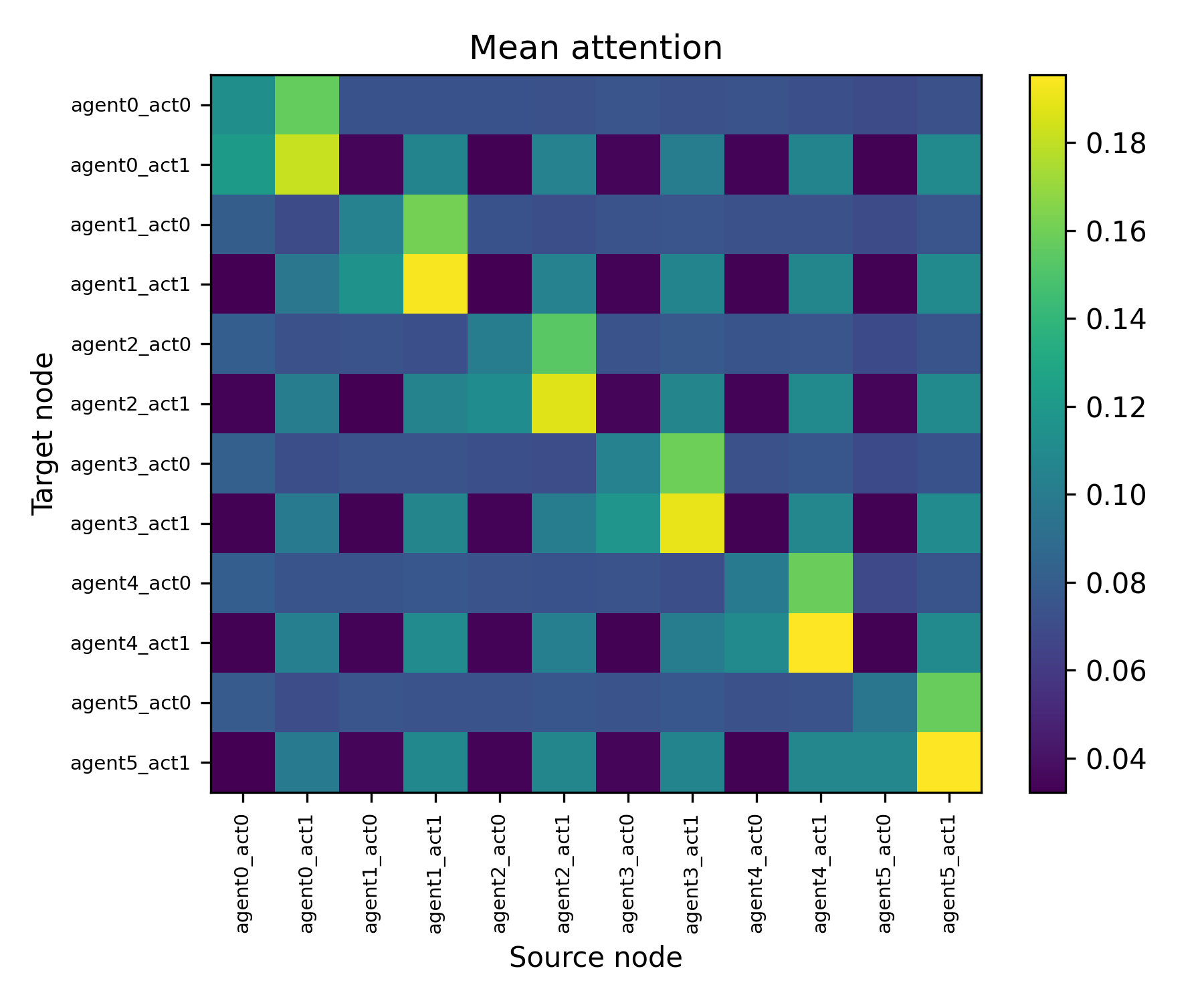}
    \caption{Mean}
  \end{subfigure}
  \caption{\textbf{Top-$K$ Selection: attention over the action graph.}
  We visualize all attention heads and their mean at the final checkpoint. Attention exhibits structured cross-agent coupling (off-diagonal mass) rather than purely local/self-attention, supporting the interpretation that AGP constructs coordination contexts through action-level dependencies.}
  \label{fig:attn-topk-priv}
\end{figure*}

\paragraph{Top-$K$ Selection.} In Figure~\ref{fig:attn-topk-priv}, individual heads exhibit heterogeneous but structured patterns. Several heads emphasize within-agent comparisons, visible as $2\times2$ blocks along the diagonal, corresponding to an agent comparing its own candidate actions. Other heads focus on cross-agent coupling between identical action types, forming horizontal and vertical bands that connect, for example, all \texttt{act1} nodes across agents. This structure is consistent with the cardinality constraint of Top-$K$: whether an agent should activate depends on how many other agents are likely to activate. The mean attention (rightmost) makes this structure explicit. It highlights a global pattern in which each agent’s \texttt{act1} node attends broadly to other agents’ \texttt{act1} nodes, while still retaining nontrivial self-attention. This indicates that AGP constructs a shared coordination context encoding the global action count implicitly, rather than relying on local utilities alone. Notably, the attention is neither uniform nor fully connected, suggesting that AGP learns a sparse but task-relevant action-dependency structure.

\begin{figure*}[ht]
  \centering
  \begin{subfigure}{0.19\linewidth}
    \centering
    \includegraphics[width=\linewidth]{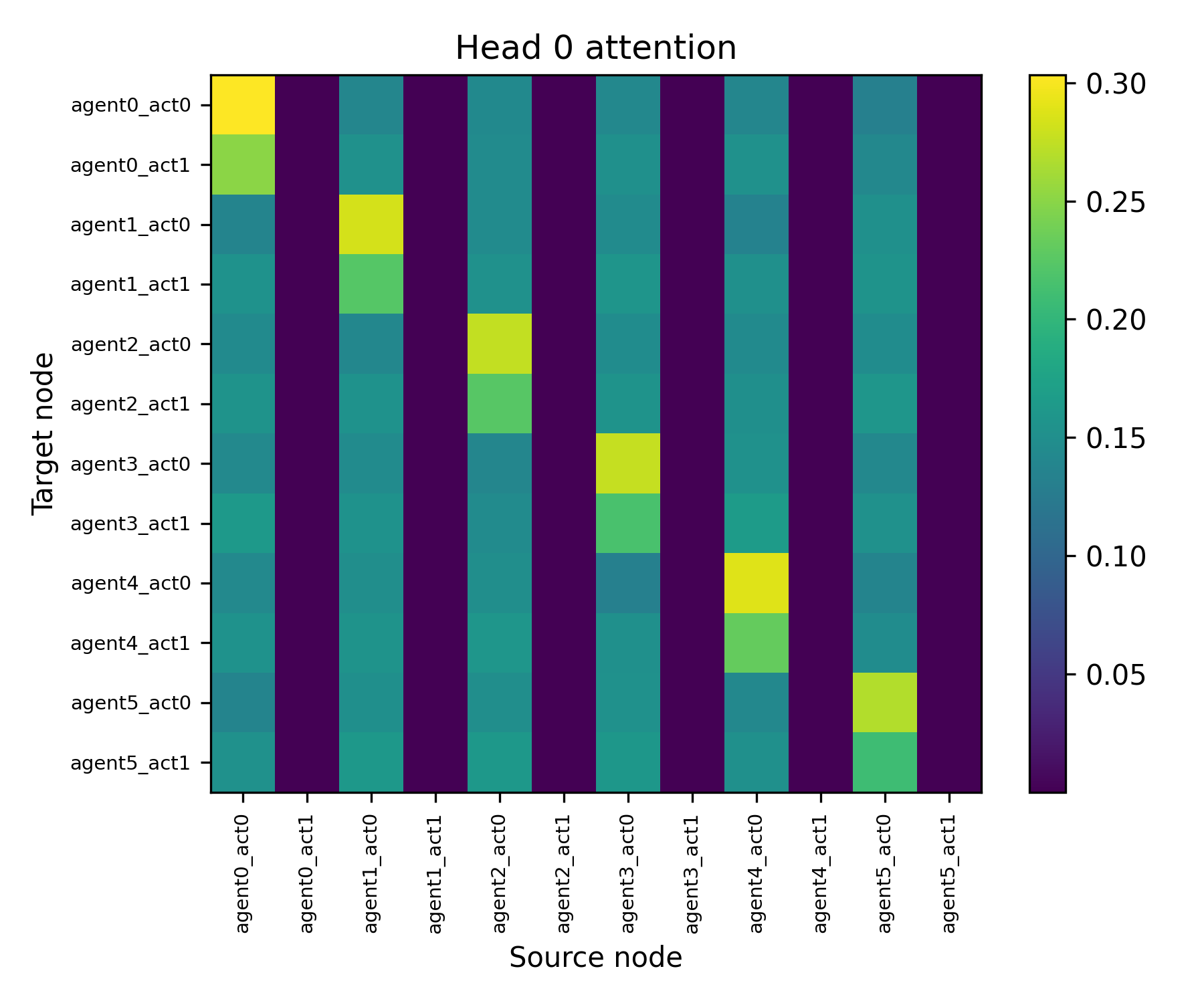}
    \caption{Head 0}
  \end{subfigure}\hfill
  \begin{subfigure}{0.19\linewidth}
    \centering
    \includegraphics[width=\linewidth]{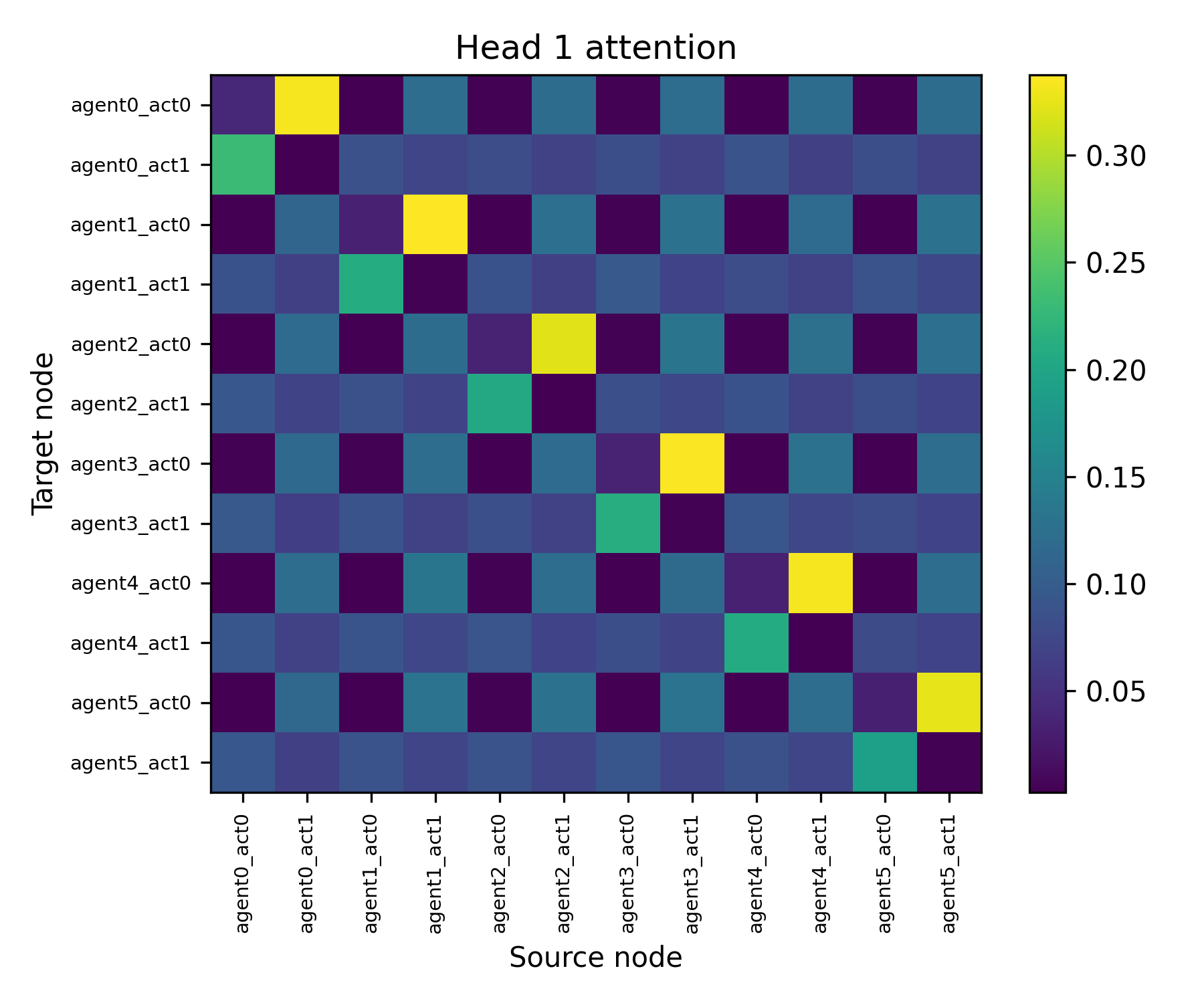}
    \caption{Head 1}
  \end{subfigure}\hfill
  \begin{subfigure}{0.19\linewidth}
    \centering
    \includegraphics[width=\linewidth]{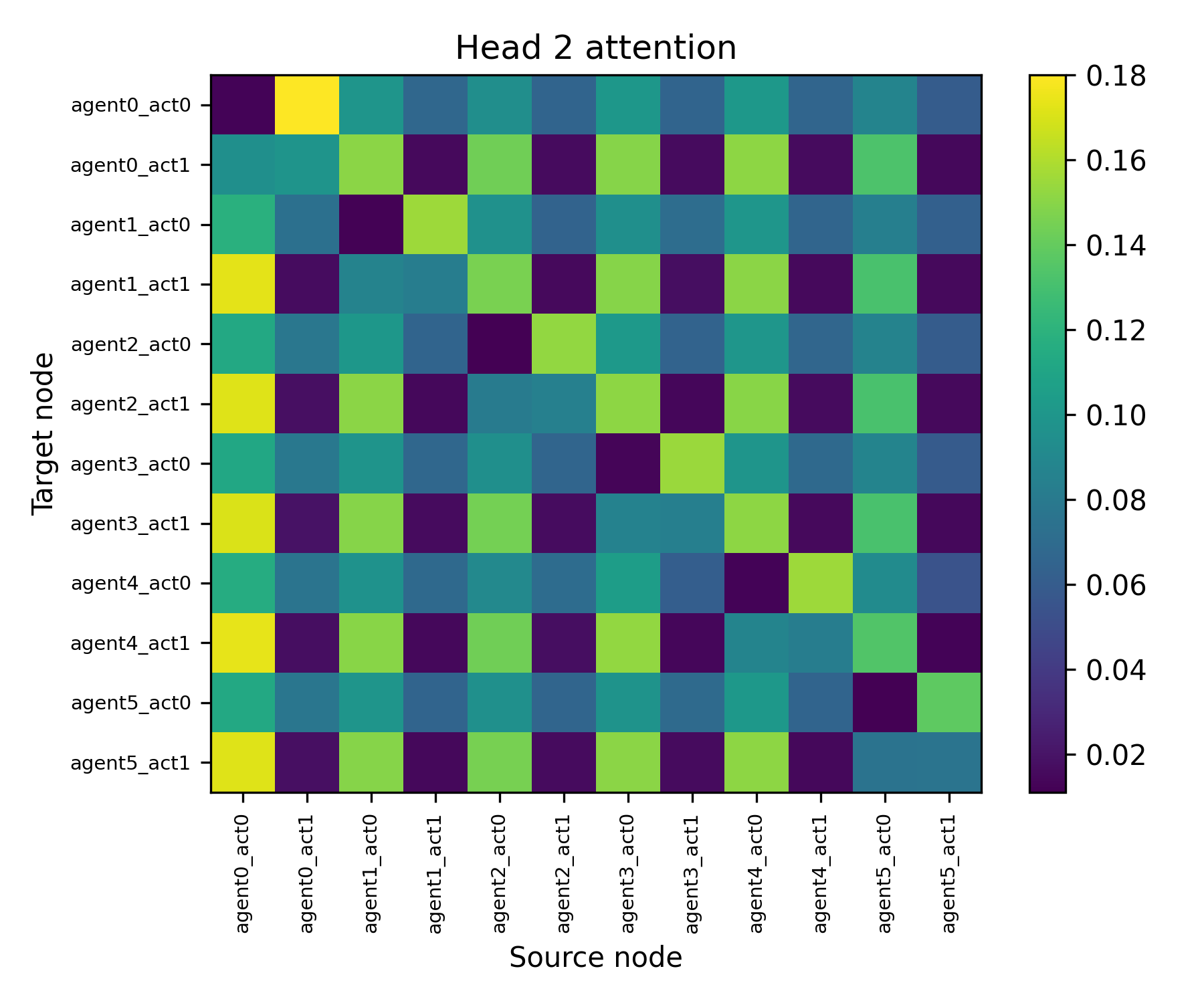}
    \caption{Head 2}
  \end{subfigure}\hfill
  \begin{subfigure}{0.19\linewidth}
    \centering
    \includegraphics[width=\linewidth]{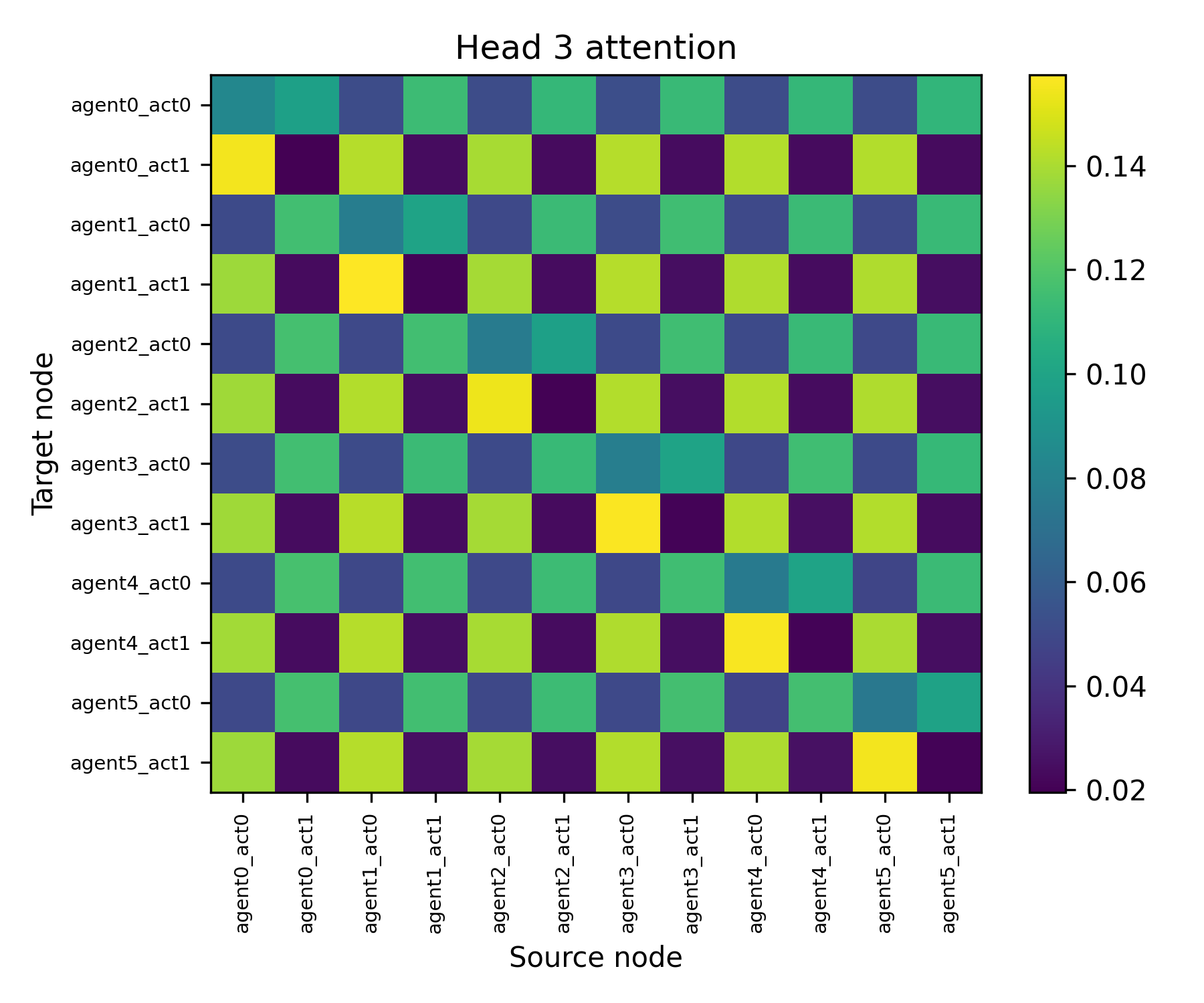}
    \caption{Head 3}
  \end{subfigure}\hfill
  \begin{subfigure}{0.19\linewidth}
    \centering
    \includegraphics[width=\linewidth]{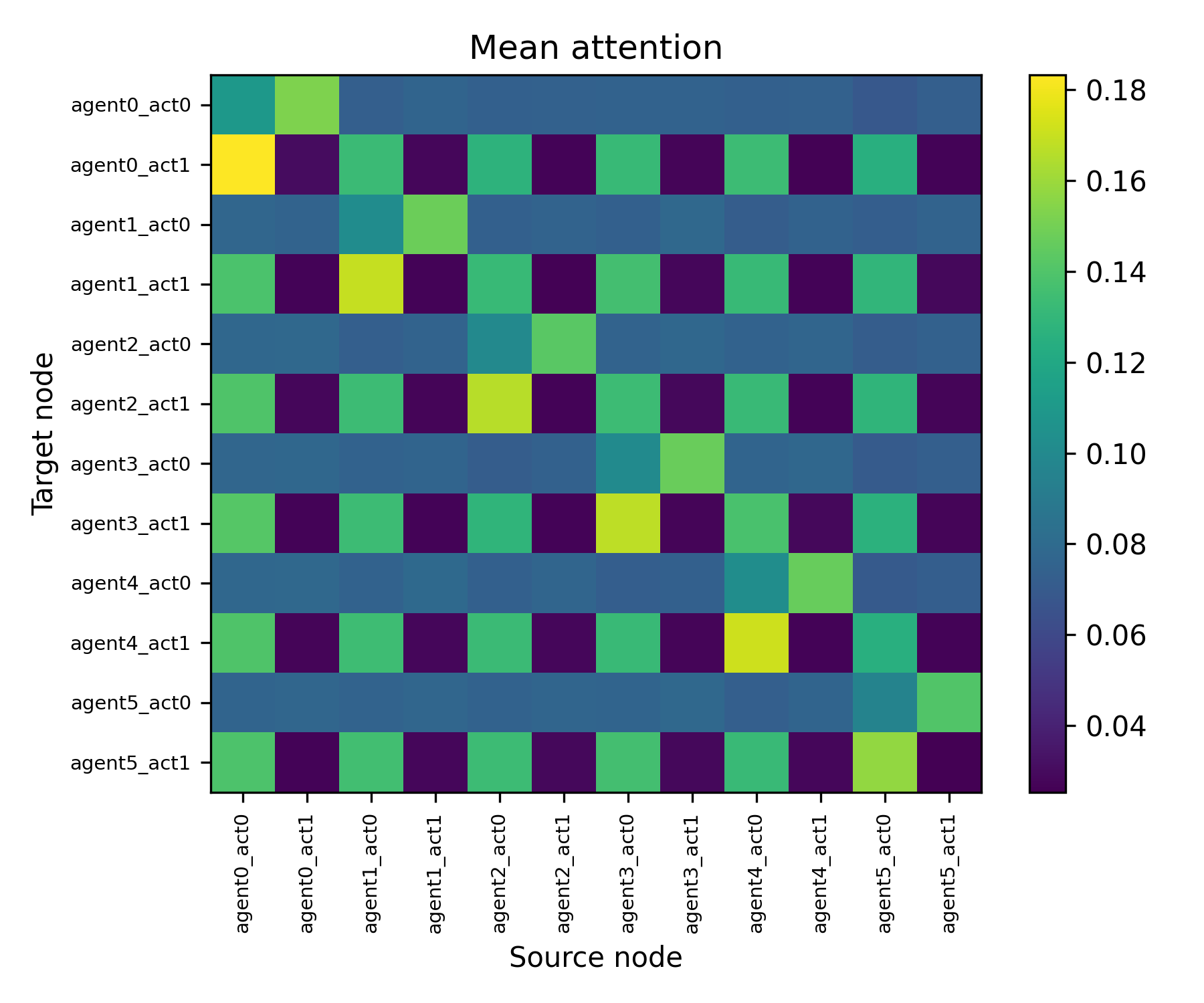}
    \caption{Mean}
  \end{subfigure}
  \caption{\textbf{Top-$K$ with anti-coordination penalty: attention over the action graph.}
  Compared to Top-$K$, attention exhibits stronger structured coupling between \texttt{act1} and \texttt{act0} nodes across agents, consistent with resolving near-ties and avoiding joint \texttt{act1} selections that incur the penalty.}
  \label{fig:attn-topk-anticoord}
\end{figure*} 

\paragraph{Top-$K$ with Anti-Coordination Penalty.} Figure~\ref{fig:attn-topk-anticoord} shows a qualitatively different pattern. Here, several heads place strong mass from \texttt{act1} nodes toward \texttt{act0} nodes of other agents, reflecting the need to reason about conflicts: selecting \texttt{act1} is beneficial only if sufficiently few others do so. Compared to the vanilla Top-$K$ case, the attention is sharper and more asymmetric, indicating stronger conditional dependence between specific action pairs. The mean attention reveals a pronounced off-diagonal structure, with \texttt{act1} nodes jointly attending to alternative actions across agents. This supports the interpretation that AGP internalizes the anti-coordination penalty by explicitly modeling incompatibilities between actions, rather than merely suppressing activation probabilities via local value estimates.

\paragraph{Implications for Policy-Side Expressivity.}
Across both tasks, attention does not collapse to diagonal or near-diagonal patterns, ruling out an interpretation of AGP as a purely independent or weakly coupled policy. Instead, the learned action graphs exhibit structured global dependencies that align with the coordination requirements of each game. Importantly, these dependencies are formed at the action level and persist at execution time, supporting our theoretical claim that AGP enlarges the executable policy class beyond $\Pi_{\mathrm{ind}}$. Finally, the fact that different heads specialize in complementary coordination roles, while their mean recovers a coherent global structure, suggests that multi-head attention serves as a mechanism for decomposing complex coordination constraints into simpler, parallel relational primitives.

\section{Multi-Agent Environments: Task Descriptions}
\label{app:mpe_tasks}

We evaluate AGP on six standard Multi-Agent Particle Environments (MPE)
\cite{mordatch2017emergence,lowe2017multi}. These tasks are continuous-state,
continuous-action, and multi-step, and they require decentralized coordination
under partial observability. Each environment defines (i) a set of agents (and
sometimes landmarks/adversaries), (ii) local observations per agent, (iii) a
team reward (plus optional shaping), and (iv) termination after a fixed horizon.

\paragraph{Common dynamics and action space.}
Agents move in a 2D plane with simple physical dynamics (position/velocity
updates with damping). Each agent outputs a continuous action vector that
controls acceleration (and, in communication tasks, an additional communication
action). Policies act on local observations as provided by the environment.

\paragraph{Observations and partial observability.}
MPE environments provide local observations: each agent receives its own
state (e.g., position, velocity) and relative features of other entities
(agents/landmarks), typically in an agent-centric coordinate frame. No agent is
given direct access to the full global state. This induces partial observability
and makes coordination non-trivial even when rewards are shared.

\subsection{Reference}
\textit{Goal:} cooperative target assignment / disambiguation. \\
\textit{Setup:} multiple agents and multiple landmarks. Each episode designates
a particular landmark as the ``reference'' target. \\
\textit{Challenge:} agents must infer and coordinate which landmark is the team
objective and move to appropriate locations without redundant assignments. This
tests role allocation and consistent joint decisions over time, rather than
one-shot coordination. \\
\textit{Typical observations:} self position/velocity and relative positions of
landmarks and other agents (often without globally identifying the target,
requiring coordination to resolve ambiguity).

\subsection{Push}
\textit{Goal:} cooperative manipulation / synchronized control. \\
\textit{Setup:} agents must push a heavy object (or influence a landmark) toward
a goal region. Individual agents are insufficient to move the object reliably,
so coordinated pushing is required. \\
\textit{Challenge:} success depends on simultaneous and compatible
actions (approach direction, timing, and force alignment). This stresses
multi-step coordination and action compatibility over a continuous horizon. \\
\textit{Typical observations:} relative positions/velocities of the object,
goal, and teammates.

\subsection{World-Comm}
\textit{Goal:} coordination with explicit communication under limited
information. \\
\textit{Setup:} some agents have access to private information about the task
(e.g., which landmark is the goal), while others must act based on communicated
signals and local observations. \\
\textit{Challenge:} agents must learn to encode task-relevant information into
messages and align motion decisions with received messages. This environment
tests whether a method can coordinate action choices conditioned on latent task
variables that are distributed across agents. \\
\textit{Typical observations:} self state, relative entity features, and a
communication channel (received messages) that is itself partial and noisy in
effect.

\subsection{Speaker-Listener}
\textit{Goal:} communication-grounded coordination with asymmetric roles. \\
\textit{Setup:} a designated speaker observes the identity of a target
landmark (or goal) but does not move; one or more listeners can move but
do not observe the target identity directly. \\
\textit{Challenge:} success requires the speaker to send an informative message
and the listener(s) to condition continuous control on that message to reach the
correct landmark. This is a clean test of information asymmetry and whether
coordination can be achieved through learned dependencies between message/action
choices. \\
\textit{Typical observations:} speaker observes task identity; listeners observe
relative positions plus received communication.

\subsection{Crypto}
\textit{Goal:} cooperative communication in the presence of an eavesdropper. \\
\textit{Setup:} agents must transmit information to a teammate while preventing
an adversary from inferring it (or minimizing adversary reward). \\
\textit{Challenge:} agents must coordinate an implicit ``encryption'' strategy:
messages must be useful for teammates but uninformative to the adversary, which
creates structured dependencies among actions and messages over time. \\
\textit{Typical observations:} private messages/keys (for some agents), local
entity features, and observed communication.

\subsection{Tag}
\textit{Goal:} mixed cooperative--competitive pursuit/evasion. \\
\textit{Setup:} a team of pursuers coordinates to tag an evader,
often with obstacles or limited sensing. \\
\textit{Challenge:} pursuers must coordinate coverage (cutting off escape
routes, avoiding redundant chasing), while the evader adapts dynamically. This
tests temporally extended coordination, robustness to non-stationarity induced
by adversarial behavior, and the ability to learn complementary roles. \\
\textit{Typical observations:} local relative positions/velocities of other
agents, with limited/global information absent.

\paragraph{Why these tasks are relevant for AGP.}
Collectively, these environments stress diverse coordination structures:
target assignment (\textit{Reference}), synchronized continuous control
(\textit{Push}), communication and information asymmetry
(\textit{World-Comm}, \textit{Speaker-Listener}, \textit{Crypto}), and dynamic
multi-agent interaction with competing objectives (\textit{Tag}). Importantly,
success in each task depends on compatible joint action patterns that are
difficult to recover from independently executed policies, making MPE a
meaningful and widely used testbed for evaluating action-level coordination.

\section{Detailed Algorithm}
\label{app:detailed_algorithm}





\begin{algorithm}[ht]
\caption{AGP (Action-Graph Encoder + Local Q-Agent)}
\label{alg:agp-execution}
\begin{algorithmic}[1]
\STATE \textbf{Input:} local observations $\mathbf{o}_t=(o_1^t,\dots,o_N^t)$; available-action masks $\{\mathrm{avail}_i^t(\cdot)\}_{i=1}^N$; previous recurrent states $\mathbf{s}_{t-1}=(s_1^{t-1},\dots,s_N^{t-1})$
\STATE \textbf{Output:} joint action $\mathbf{a}_t=(a_1^t,\dots,a_N^t)$

\vspace{0.25em}
\STATE \textbf{(Optional temporal update)} \hfill{\small(used if the encoder has an RNN)}
\FOR{$i=1,\dots,N$ \textbf{in parallel}}
    \STATE $s_i^t \leftarrow \mathrm{GRU}(o_i^t, s_i^{t-1})$
\ENDFOR

\vspace{0.25em}
\STATE \textbf{Build action-node inputs (global action graph over all agents' actions).}
\STATE Define node set $\mathcal{V}=\{u_a^i : i\in\{1,\dots,N\},\, a\in\mathcal{A}_i\}$ (one node per agent--action).
\FOR{each node $u_a^i\in\mathcal{V}$ \textbf{in parallel}}
    \STATE $\mathbf{x}_a^i \leftarrow \phi(o_i^t,a)$ \hfill{\small(e.g., concat $(o_i^t,\mathrm{OneHot}(a))$ and optionally $s_i^t$)}
    \STATE $\mathbf{z}_a^{i,(0)} \leftarrow W\,\mathbf{x}_a^i$ \hfill{\small(linear projection to embedding dim)}
\ENDFOR

\vspace{0.25em}
\STATE \textbf{Action-graph message passing (attention over action nodes).}
\FOR{$\ell=1,\dots,L$}
    \FOR{each node $u_a^i\in\mathcal{V}$ \textbf{in parallel}}
        \STATE $\mathbf{z}_a^{i,(\ell)} \leftarrow
        \sum\limits_{u_b^j\in\mathcal{V}}
        \alpha^{(\ell)}_{(i,a),(j,b)}\,\mathbf{z}_b^{j,(\ell-1)}$
        \hfill{\small(MHA; optionally mask cross-agent edges and/or invalid actions)}
    \ENDFOR
\ENDFOR
\STATE Let $\mathbf{h}_a^i := \mathbf{z}_a^{i,(L)}$ for all $u_a^i\in\mathcal{V}$.

\vspace{0.25em}
\STATE \textbf{Construct coordination contexts (masked pooling over each agent's action nodes).}
\FOR{$i=1,\dots,N$ \textbf{in parallel}}
    \STATE $\displaystyle
    \boldsymbol{\kappa}_i \leftarrow
    \mathrm{Pool}\Big(\{\mathbf{h}_a^i : a\in\mathcal{A}_i\},\, \mathrm{avail}_i^t\Big)
    \;\;=\;\;
    \frac{\sum_{a\in\mathcal{A}_i}\mathrm{avail}_i^t(a)\,\mathbf{h}_a^i}{\sum_{a\in\mathcal{A}_i}\mathrm{avail}_i^t(a)\;\vee\;1}$
    \hfill{\small(masked mean; matches implementation)}
\ENDFOR

\vspace{0.25em}
\STATE \textbf{Local Q-evaluation and decentralized action selection.}
\FOR{$i=1,\dots,N$ \textbf{in parallel}}
    \STATE $\mathbf{q}_i(\cdot)\leftarrow Q_i(\,\cdot \mid o_i^t,\boldsymbol{\kappa}_i\,)$ \hfill{\small(local network; outputs $|\mathcal{A}_i|$ scores)}
    \STATE Choose $a_i^t$ via $\epsilon$-greedy (or greedy) over $\mathbf{q}_i(\cdot)$ subject to $\mathrm{avail}_i^t(\cdot)$
\ENDFOR

\vspace{0.25em}
\STATE \textbf{return} $\mathbf{a}_t=(a_1^t,\dots,a_N^t)$
\end{algorithmic}
\end{algorithm} 

We provide a step-by-step description of decentralized execution in AGP to clarify how
action-level coordination is realized at test time. The algorithm explicitly separates
(i)~construction of a global action graph over all agents’ available actions,
(ii)~message passing to compute coordination contexts, and
(iii)~decentralized action selection by local Q-agents.

\section{Computational Complexity of Action-Graph Policies}
\label{app:complexity}

We analyze the computational complexity of Action-Graph Policies (AGP) at both training
and execution time, and compare it to standard MARL baselines, including value
decomposition and coordination-graph methods. Throughout, let $N$ denote the number of
agents, $|\mathcal{A}|$ the (maximum) per-agent action space size, $d$ the embedding
dimension, $H$ the number of attention heads, and $L$ the number of message-passing layers.

\paragraph{Action-Graph Construction.}
AGP constructs a graph over all agent--action pairs, yielding
$|\mathcal{V}| = \sum_{i=1}^N |\mathcal{A}_i| \le N|\mathcal{A}|$ nodes.
This graph is shared across agents and timesteps and does not depend on episode length.
Node features are computed independently via an encoder $\phi(o_i,a)$, incurring
$\mathcal{O}(N|\mathcal{A}|\,d)$ cost per timestep.

\paragraph{Message Passing over the Action Graph.}
Each message-passing layer applies multi-head attention over action nodes.
In the fully connected case, a single layer has worst-case complexity
\[
\mathcal{O}\big(H \, |\mathcal{V}|^2 \, d\big) = \mathcal{O}\big(H \, N^2 |\mathcal{A}|^2 \, d\big).
\]
In practice, this cost is significantly reduced by (i) masking invalid actions,
(ii) restricting attention to structured neighborhoods (e.g., same-action or
same-agent edges), and (iii) batching across environments and timesteps.
With $L$ layers, the total message-passing cost scales as
$\mathcal{O}(L H N^2 |\mathcal{A}|^2 d)$ in the worst case.

While this may appear large, modern graph neural network implementations routinely scale
to graphs with millions of nodes and edges using GPU parallelism and optimized attention
kernels. Importantly, AGP’s graph is small in typical MARL settings
(e.g., $N\le 10$, $|\mathcal{A}|\le 10$), and its cost is amortized across agents, as the
action graph is processed \emph{once} per timestep rather than once per agent.

\paragraph{Coordination Context Pooling.}
After message passing, each agent pools over its own action nodes to form a coordination
context $\boldsymbol{\kappa}_i$. This step costs
$\mathcal{O}(N|\mathcal{A}|d)$ and is negligible relative to attention.

\paragraph{Local Policy or Q-Value Evaluation.}
Each agent evaluates a local network conditioned on $(o_i,\boldsymbol{\kappa}_i)$.
This incurs $\mathcal{O}(N|\mathcal{A}|d)$ cost, matching standard independent or
value-based MARL methods up to constant factors.

\paragraph{Overall AGP Complexity.}
At execution time, the dominant cost in AGP is action-graph message passing.
The total per-timestep complexity is
\[
\mathcal{O}\big(L H N^2 |\mathcal{A}|^2 d \big),
\]
with all operations fully parallelizable across nodes and heads.
Crucially, AGP does \emph{not} require joint action maximization, combinatorial search, or
sequential agent execution.

\paragraph{Comparison with Coordination Graph Methods.}
Coordination-graph methods (e.g., DCG, DICG) define pairwise utilities over agent pairs and
typically require either:
(i)~independent greedy execution (cheap but limited), or
(ii)~joint action inference via message passing (e.g., max-sum) to compute
$\arg\max_{\mathbf{a}} Q_{\mathrm{tot}}(\mathbf{o},\mathbf{a})$.
The latter has complexity exponential in the treewidth of the coordination graph and is
NP-hard for general graphs with cycles.
As a result, most practical implementations avoid exact inference at test time and revert
to independent greedy execution, reintroducing the policy-side bottleneck.

In contrast, AGP shifts coordination from inference-time optimization to
representation-time message passing. Its cost is polynomial and fixed per timestep,
independent of graph cycles or higher-order dependencies, and does not grow
exponentially with $N$.

\paragraph{Comparison with Other Baselines.}
\begin{itemize}\itemsep0pt
    \item \textbf{Independent Learners (IQL):}
    $\mathcal{O}(N|\mathcal{A}|d)$ per timestep; lowest cost but severely restricted
    expressivity.
    \item \textbf{Value Decomposition (VDN/QMIX):}
    Similar per-agent cost at execution; centralized mixing is used only during training.
    Expressivity is limited by independent execution.
    \item \textbf{Policy Factorization (MACPF/FOP):}
    Often require sequential or autoregressive action generation, incurring
    $\mathcal{O}(N)$ latency and preventing parallel execution.
    \item \textbf{AGP:}
    Higher per-timestep cost due to action-graph attention, but fully parallel,
    non-sequential, and independent of joint-action inference.
\end{itemize}

\paragraph{Practical Scalability.}
In typical MARL benchmarks, $N$ and $|\mathcal{A}|$ are small, making AGP’s overhead modest
relative to environment simulation and network forward passes.
More importantly, AGP scales favorably with coordination complexity:
it can represent dense, higher-order dependencies without incurring exponential inference
costs.
As in large-scale graph learning (e.g., social networks or molecular graphs), sparsification,
local attention, or hierarchical action graphs can further reduce complexity for larger
agent populations, which we leave to future work.

\paragraph{Summary.}
AGP trades modest polynomial computation for a strict enlargement of the executable policy
class. Unlike coordination graphs, it avoids combinatorial joint inference, and unlike
independent or factorized policies, it embeds coordination directly into the policy
representation. This makes AGP computationally feasible, parallelizable, and scalable
while resolving the policy-side bottleneck identified in the main paper.  

\end{document}